\providecommand{\l@chapter}
\renewcommand{\todo}[2][]{\tikzexternaldisable\@todo[#1]{#2}\tikzexternalenable}
\newcounter{mycomment} 
\newlength{\luw}
\newlength{\luh}
\newcommand{\flabels}[1]{%
    \settowidth{\luw}{\begin{tabular}{l}\,#1\,\end{tabular}}%
		\settototalheight{\luh}{%
		\begin{tcolorbox}[width=\luw+5pt, boxsep=1pt, left=0pt, right=0pt, top=0pt, bottom=0pt,nobeforeafter]%
		\begin{tabular}{l}%
		\,#1\,\vphantom{$f^1$fq}%
		\end{tabular}%
		\end{tcolorbox}%
		}%
		\pbox[t][][b]{\textwidth}{%
		\resizebox{!}{0.8\luh}{%
		\begin{tcolorbox}[width=\luw+5pt, boxsep=1pt, left=0pt, right=0pt, top=0pt, bottom=0pt,nobeforeafter]%
		\begin{tabular}{l}%
		\,#1\,\vphantom{$f^1$fq}%
		\end{tabular}%
		\end{tcolorbox}%
		}%
		}%
}%
\newcommand{\labelgraphics}[3][width=0.5\linewidth]{%
\setlength{\fboxsep}{0pt}%
\tikzset{external/export next=false}%
\begin{tikzpicture}%
  \node[inner sep = 0pt] (a) {\includegraphics[#1]{#2}};
  \node[anchor=north west,inner sep=1pt] at (a.north west) {\flabels{#3}};
\end{tikzpicture}%
}%
\DeclareMathAlphabet{\mathcalligra}{T1}{calligra}{m}{n}
\DeclareMathAlphabet{\mathantt}{OT1}{antt}{li}{it}
\DeclareMathAlphabet{\mathpzc}{OT1}{pzc}{m}{it}
\newcommand{\argmin}{\mathop{\rm argmin}}
\DeclareMathOperator{\sign}{sign}
\newcommand{\<}{\langle}
\renewcommand{\>}{\rangle}
\renewcommand{\mid}{\:|\,}
\newcommand{\Real}{\mathbb{R}}
\newcommand{\A}{\mathcal{A}}
\newcommand{\V}{\mathcal{V}}
\newcommand{\E}{\mathcal{E}}
\renewcommand{\L}{\mathcal{L}}
\def\T{^{\mathsf T}}
\newcommand{\gray}{\color[rgb]{0.5,0.5,0.5}}
\newcommand{\red}{\color[rgb]{1,0,0}}
\renewcommand*{\paragraph}[1]{\par{\normalsize\bf #1}\ }
\def\mathrlap{\mathpalette\mathrlapinternal} 
\def\mathllap{\mathpalette\mathllapinternal}
\def\mathllapinternal#1#2{\llap{$\mathsurround=0pt#1{#2}$}}
\def\mathrlapinternal#1#2{\rlap{$\mathsurround=0pt#1{#2}$}}
\def\leftbb{\mathrlap{[}\hskip1.3pt[}
\def\rightbb{]\hskip1.36pt\mathllap{]}}
\newcommand{\IF}{\mbox{ \rm if }}
\newcommand{\OTHERWISE}{\mbox{ \rm otherwise}}
\newcommand{\Algorithm}[1]{{Algorithm\,\ref{#1}}}
\newcommand{\Section}[1]{\S\ref{#1}}
\newcommand{\Figure}[1]{{Figure\,\ref{#1}}}
\newcommand{\revisit}[1][]{%
\ifthenelse{\equal{#1}{}}{
\ensuremath{\red \triangle}\xspace}{%
{\ensuremath{\red \rhd}\xspace}%
{\gray #1}%
{\ensuremath{\red \lhd}\xspace}%
}%
}
\def\anchor [#1]#2{%
\phantomsection{}#1\xspace\label{#2}%
\def\arga{#2}%
\global\expandafter\def\csname#2\endcsname{%
\hyperref[#2]{#1}\xspace%
}%
}
\def\codefunction [#1]#2{%
\phantomsection{}\label{#2}{\ttfamily #1\xspace}%
\def\arga{#2}%
\global\expandafter\def\csname#2\endcsname{%
\hyperref[#2]{\ttfamily #1}\xspace%
}%
}
\newcolumntype{L}[1]{>{\raggedright\let\newline\\\arraybackslash\hspace{0pt}}m{#1}}
\newcolumntype{C}[1]{>{\centering\let\newline\\\arraybackslash\hspace{0pt}}m{#1}}
\newcolumntype{R}[1]{>{\raggedleft\let\newline\\\arraybackslash\hspace{0pt}}m{#1}}
\newcommand{\mm}{\mathit{m}}
\newcommand{\ubar}[1]{\underaccent{\bar}{#1}}
\newcommand{\footlabel}[2]{%
    \addtocounter{footnote}{1}%
    \footnotetext[\thefootnote]{%
        \addtocounter{footnote}{-1}%
        \refstepcounter{footnote}\label{#1}%
        #2%
    }%
    $^{\ref{#1}}$%
}
\newcommand{\footref}[1]{%
    $^{\ref{#1}}$%
}
\def\epsilon{\varepsilon}
\newlength{\myskip}
\renewenvironment{itemize}%
  {\begin{list}{$\bullet$}%
     {\topsep=0in\itemsep=0pt\parsep=0pt\partopsep=0in\usecounter{itemi}}%
   }{\end{list}\addvspace{0pt}}
\let\corollary\@undefined
\let\c@corollary\@undefined
\let\endcorollary\@undefined
\let\definition\@undefined
\let\c@definition\@undefined
\let\enddefinition\@undefined
\let\theorem\@undefined
\let\c@theorem\@undefined
\let\endtheorem\@undefined
\let\lemma\@undefined
\let\c@lemma\@undefined
\let\endlemma\@undefined
\newtheoremstyle{tightItalic}
  {0.5\myskip}
  {0.5\myskip}
  {}
  {}
  {\itshape}
  {.}
  { }
  {}
\newtheoremstyle{tightBf}
  {0.5\myskip}
  {0.5\myskip}
  {}
  {}
  {\bf}
  {.}
  {.5em}
  {}
\newtheoremstyle{tightBBf}
  {0.5\myskip}
  {0.5\myskip}
  {}
  {}
  {\bf}
  {.}
  {.5em}
  {}
\theoremstyle{tightBf}
\newtheorem*{theorem*}{Theorem}
\newaliascnt{corollary}{theorem}%
\newaliascnt{definition}{theorem}%
\newtheorem{definition}[definition]{Definition}
\newaliascnt{statement}{theorem}%
\newaliascnt{lemma}{theorem}%
\newtheorem{lemma}[lemma]{Lemma}
\newaliascnt{example}{theorem}%
\newtheorem{example}[example]{Example}
\newaliascnt{remark}{theorem}%
\newtheorem*{remark*}{Remark}
\newaliascnt{proposition}{theorem}%
\newtheorem{proposition}[proposition]{Proposition}
\newaliascnt{property}{theorem}%
\theoremstyle{tightBBf}
\newaliascnt{problem}{theorem}%
\theoremstyle{tightItalic}
\crefname{section}{\S}{\S}
\ifcvwwfinal\pagestyle{empty}\fi
\begin{document}

\title{Solving Dense Image Matching in Real-Time using Discrete-Continuous
Optimization}

\author{Alexander Shekhovtsov, Christian Reinbacher, Gottfried Graber and Thomas Pock
\\
Institute for Computer Graphics and Vision, Graz University of Technology
\\
{\tt\small \{shekhovtsov,reinbacher,graber,pock\}@icg.tugraz.at}
}

\maketitle
\ifcvwwfinal\thispagestyle{fancy}\fi

\begin{abstract}
Dense image matching is a fundamental low-level problem in Computer Vision, which has received tremendous attention from both discrete and continuous optimization communities. 
The goal of this paper is to combine the advantages of discrete and continuous optimization in a coherent framework. We devise a model based on energy minimization, to be optimized by both discrete and continuous algorithms in a consistent way.
In the discrete setting, we propose a novel optimization algorithm that can be massively parallelized.
In the continuous setting we tackle the problem of non-convex regularizers by a formulation based on differences of convex functions.
The resulting hybrid discrete-continuous algorithm can be efficiently accelerated by modern GPUs and we demonstrate its real-time performance for the applications of dense stereo matching and optical flow.
\end{abstract}


\section{Introduction}
The \emph{dense image matching} problem is one of the most basic problems in computer vision: 
The goal is to find matching pixels in two (or more) images. The applications include stereo, optical flow, medical image registration, face recognition~\cite{DBLP:journals/prl/ArashlooK14}, \etc.
Since the matching problem is inherently ill-posed,
typically \emph{optimization} is involved in solving it. We can distinguish two fundamentally
different approaches: discrete and continuous optimization. Whereas discrete approaches (see~\cite{Kappes-2013-benchmark} for a recent comparison)
assign a distinct label to each output pixel, 
continuous approaches try to solve for a \emph{function} using the calculus of variations
\cite{Chambolle2011,Combettes2011,Ochs2014}. Both approaches have
received enormous attention, and there exist state-of-the-art algorithms in both camps:
continuous~\cite{Ranftl2014,Ranftl2011,Sinha2014} and discrete~\cite{Menze2015GCPR,Taniai2014}. 
Due to the specific mathematical tools available to solve the problems (discrete combinatorial optimization vs. continuous calculus of variations), both approaches have distinct advantages and disadvantages.
\begin{figure}[ht!]
  \centering
	\setlength{\fboxsep}{0pt}
	\setlength{\tabcolsep}{0pt}%
	\tabulinesep=0.5pt%
	\renewcommand{\arraystretch}{0}%
	\begin{tabular}{|c|}
	\hline%
  \labelgraphics[width=0.95\columnwidth]{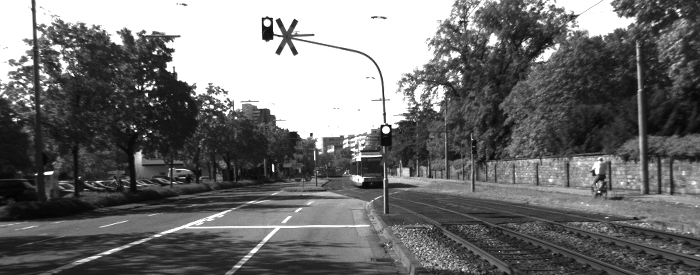}{Input}\\
	\hline%
  \labelgraphics[width=0.95\columnwidth]{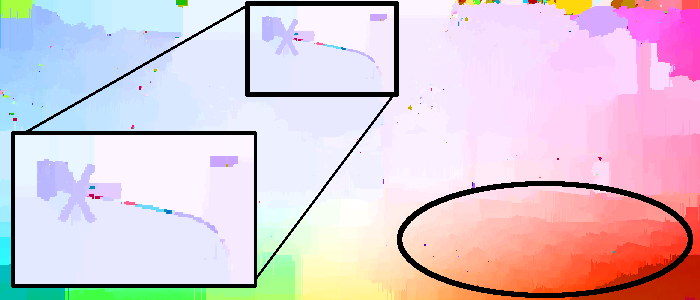}{Discrete}\\
	\hline%
  \labelgraphics[width=0.95\columnwidth]{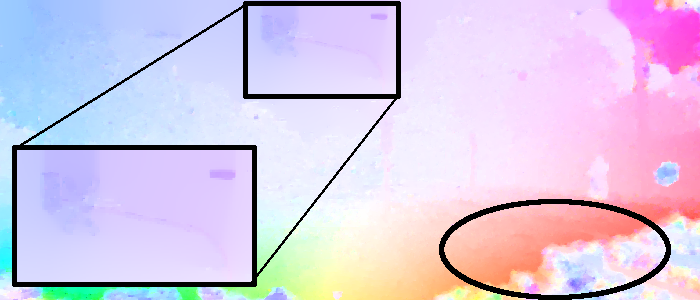}{Continuous}\\
	\hline%
  \labelgraphics[width=0.95\columnwidth]{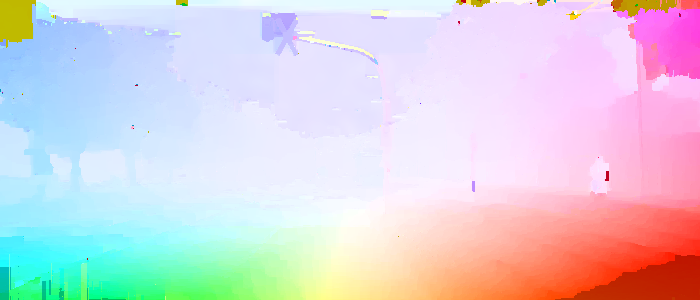}{Combined}\\
	\hline
\end{tabular}\\[5pt]
\begingroup
\resizebox{\linewidth}{!}{
\renewcommand{\arraystretch}{1.0}%
\begin{tabular}{c|ccc}
                & data term &        Large motion\ \ &\ \ Parallelization \\ 
								\hline
    Discrete &  Arbitrary (sampled) &  Easy &            Difficult \\ 
		\hline
Continuous\ \ &\ \ Convex (linearized) &  Difficult &     Easy\\ 
  \end{tabular}
	}
	\endgroup
	\vskip-5pt
  \caption{
	Optical flow problem solved by a purely discrete method, a purely continuous method and the combined method. All methods are as described in this paper, they use the same data term and are run until convergence here.
	In the discrete solution we can see small scale details and sharp motion boundaries but also discretization artifacts. 
	The continuous solution exhibits sub-pixel accuracy (smoothness), but lacks small details and has difficulties with large motions. 
	The combined solution delivers smooth flow fields while retaining many small scale details. 
}
  \label{fig:intro_flow_discrete_cont}
\end{figure}

In this paper, we argue that on a fundamental level the advantages and disadvantages of discrete and continuous optimization for dense matching problems are \emph{complementary} as summarized in \Figure{fig:intro_flow_discrete_cont}.
The previous work combining discrete and continuous optimization primarily used discrete optimization to fuse (find the optimal crossover) of candidate continuous proposals, \eg~\cite{Woodford-08,Taniai2014} (stereo) and~\cite{FusionFlow} (flow). The latter additionally performs local continuous optimization of the so-found solution. 
Many works also alternate between continuous and discrete optimizations, addressing a Mumford-Shah-like model, \eg, \cite{Brox-06}. 
Similarly to~\cite{FusionFlow} we introduce a continuous energy which is optimized using a combined method. However, we work with a full (non-local) discretization of this model and propose new parallel optimization methods.

The basic difference in discrete and continuous approaches lies in the handling of the data term. The data term is a measure how well the
solution (\ie value of a pixel) fits the underlying measurement (\ie input images). In the discrete
setting, the solution takes discrete labels, and hence the number of labels is finite. Typically the
data cost is precomputed for all possible labels. The discrete optimization then uses the data cost
to find the optimal label for each pixel according to a suitable model in an energy minimization
framework. We point out that due to the \emph{sampling} in both label space and spatial domain, the
discrete algorithm has access to the full information at every step. \Ie it deals with a {\em global
optimization} model and in some lucky cases can find a globally optimal solution to it or provide an
approximation ratio or partial optimality guarantees~\cite{SSS-15-IRI}.

In the continuous setting, the solution is a continuous function. This means it is not possible to
precompute the data cost; an infinite number of solutions would require infinite amount of
memory. More importantly, the data cost is a non-convex function stemming from the similarity measure between the images.
In order to make the optimization problem tractable, a popular approach is the linearization of the data cost. 
However, this introduces a range of new problems, namely the inability to deal with large motions due to the fact that the linearization is valid only in a small neighborhood around the linearization point. 
Most continuous methods relying on linearization therefore use a coarse-to-fine framework in an attempt to overcome this problem \cite{Brox04}. 
One exception is a recent work~\cite{KolmogorovPR15}, which can handle piece-wise linear data terms and truncated TV regularization.
\par
Our goal in this paper is to combine the advantages of both approaches, as well as real-time
performance, which imposes tough constraints on both methods resulting in a number of challenges:
\paragraph{Challenges} 
The discrete optimization method needs to be highly parallel and able to {\em couple} the noisy / ambiguous data over large areas. The continuous energy should be a refinement of the discrete energy so that we can evaluate the two-phase optimization in terms of a single energy function. The continuous method needs to handle robust (truncated) regularization terms.
\paragraph{Contribution} Towards the posed challenges, we propose: i) a new method for the discrete problem, working in the dual (\ie making equivalent changes of the data cost volume), in parallel on multiple chains; 
ii) a continuous optimization method, reducing non-convex regularizers to a primal-dual method with non-linear operators~\cite{Valkonen2014};
iii) an efficient implementation of both methods on GPU and proof of concept experiments showing advantages of the combined approach.
%


\section{Method}\label{sec:method}
%
In this section we will describe our two-step approach to the dense image matching problem. To combine the previously discussed advantages of discrete and continuous optimization methods it is essential to minimize the same energy in both optimization methods. Starting from a continuous energy formulation in \cref{sec:model}, we first show how to discretize the energy in \cref{sec:discretization} and subsequently minimize it using a novel discrete parallel block coordinate descent, described in \cref{sec:discrete_optimization}. The output of this algorithm will be the input to a refinement method which is posed as a continuous optimization problem, solved by a non-linear primal-dual algorithm described in \cref{sec:continuous_refinement}.
\subsection{Model}\label{sec:model}
Let us formally define the dense image matching problem to be addressed by the discrete-continuous optimization approach.
In both formulations we consider that the image domain is a discrete {\em set of pixels} $\V$. The continuous formulation has continuous ranged variables $u = (u^k_{i} \in \Real \mid k = 1,\dots d,\ i\in \V)$, where $d = 1, 2$  for stereo / flow, respectively. The matching problem is formulated as
\begin{equation}\label{eqn:mrf_formulation}
\min_{u\in U} \Big[ E(u) = D(u) + R(A u) \Big],
\end{equation}
where $U = \mathbb{R}^{d \times \mathcal{V}}$; $D$ is the {\em data term} and $R(A u)$ is a {\em regularizer} ($A$ is a linear operator explained below). The discrete formulation will quantize variable ranges.
%
%
%
%
%
\par
\paragraph{Data Term} We assume $D(u) = \sum_{i\in \V} D_i(u_i)$, where $D_i \colon \Real^d \to \Real$ encodes the deviation of $u_i$ from some underlying measurement. A usual choice for dense image matching are robust filters like Census Transform or Normalized Cross Correlation, computed on a small window around a pixel. This data term is non-convex in $u$ and piecewise linear. In the discrete setting, the data term is sampled at discrete locations, in the continuous setting, the data term is convexified by linearizing or approximating it around the current solution. The details will be described in the respective sections.

\paragraph{Regularization Term} The regularizer encodes properties of the solution of the energy minimization like local smoothness or preservation of sharp edges. The choice of this term is crucial in practice, since the data term may be unreliable or uninformative in large areas of dense matching problems. We assume 
\begin{align}\label{regularizer-form}
R(A u) = \sum_{ij\in\E} \omega_{ij} \sum_{k = 1}^{d} r ( (A u^k )_{ij} ),
\end{align}
where $\E\subset \V\times\V$ is the set of {\em edges}, \ie, pairs of neighboring pixels; linear operator $A \colon \Real^\V  \to \Real^{\E} \colon u^k \mapsto ( u^k_i - u^k_j \in \Real \mid \forall ij\in\mathcal{E})$ essentially computes gradients along the edges in $\E$ for the solution dimension $k$; the gradients are penalized by the {\em penalty function} $r\colon \Real \to \Real$ and $\omega_{ij}$ are image dependent per-edge strength weights, reducing the penalty around sharp edges. 
Our particular choice for the penalty function $r$ is depicted in \cref{fig:regularizer_cont}.  We chose to use a truncated norm which has shown to be robust against noise that one typically encounters in dense matching problems. It generalizes truncated Total Variation in the continuous setting. In the discrete setting it generalizes the $P1$-$P2$ penalty model~\cite{hirschmuller2011semi}, Potts model and the truncated linear model.
\begin{figure}[ht]
  \begin{center}
  \includegraphics[width=0.8\columnwidth,height=0.5\columnwidth]{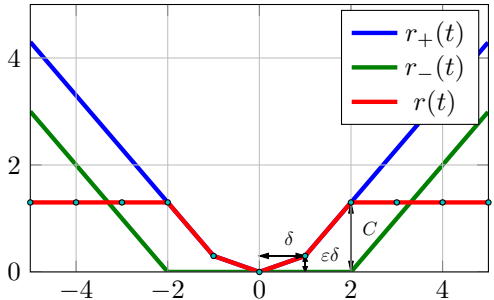}
  \caption{
	Regularizer function $r$. In our continuous optimization method it is decomposed into a difference of convex functions $r_{+} - r_{-}$. For the discrete optimization it is sampled at label locations depicted as dots.
	}
  \label{fig:regularizer_cont}
  \end{center}
\end{figure}


\subsection{Discrete Formulation}\label{sec:discretization}
In the discrete representation we will use the following formalism.
To a continuous variable $u_i$ we associate a discrete variable $x_i \in \L$.
The discrete label space $\L$ can be chosen to our convenience as long as it has the desired number of elements, denoted $K$. We let $\L$ to be vectors in $\{0,1\}^K$ with exactly one component equal $1$ (the 1-hot encoding of natural numbers from 1 to $K$). 
For $f_i \in \Real^K$ we denote $f_i(x_i) = \<f_i, x_i\> = f_i\T x_i$ and for $f_{ij} \in \Real^{K \times K}$ we denote $f_{ij}(x_i, x_j) = x_i \T f_{ij} x_j$. Let $f = (f_w \mid w \in \V \cup \E)$ denote the energy {\em cost vector}. The {\em energy function} corresponding to the cost vector $f$ is given by 
\begin{align}\label{energy-pairwise}
f(x) = \sum_{i\in\V}f_i(x_i) + \sum_{ij\in\E}f_{ij}(x_i,x_j).
\end{align}
Whenever we need to refer to $f$ as a function and not as the cost vector, we will always use the argument notation, \eg $f(x) \geq g(x)$ is different from $f \geq g$.
\par
Energy function $f$ that can be written as $\sum_{i}f_i(x_i) = \<f, x\>$ is called {\em modular}, {\em separable} or {\em linear}. Formally, all components $f_{ij}$ of $f$ are identically zero. If $f_{ij}$ is non-zero only for a subgraph of $(\V,\E)$ which is a set of chains, we say that $f$ is a {\em chain}.
\par
The {\em discrete energy minimization} problem is defined as
\begin{align}\label{emin-discrete}
\min_{x \in \L^\V} f(x).
\end{align}
\paragraph{Stereo} 
We discretize a range of disparities and let $u(x) \in \Real^\V$ denote the continuous solution corresponding to the {\em labeling} $x$. We set $f_i(x_i) = D_i(u(x_i))$ and $f_{ij}(x_i,x_j) = \omega_{ij}r((A u(x))_{ij})$. 
\paragraph{Flow} Discretization of the flow is somewhat more challenging. Since $u_i$ is a 2D vector, assuming large displacements, discretizing all combinations is not tractable. Instead, components $u_i^{1}$ and $u_i^{2}$ can be represented as separate discrete variables $x_{i^1}, x_{i^2}$ , where $(i^1, i^2)$ is a pair of nodes duplicating $i$, leading to the {\em decomposed} formulation~\cite{Shekhovtsov-Kovtun-Hlavac-08-CVIU}. To retain the pairwise energy form~\eqref{energy-pairwise}, this approach assigns the data terms $D_i(u_i)$ to a pairwise cost $f_{i^1 i^2}(x_{i^1}, x_{i^2})$ and the regularization is imposed on each layer of variables $(x_{i^1} \mid i\in\V)$ and $(x_{i^2} \mid i\in \V)$ separately. To this end, we tested a yet simpler representation, in which we assign optimistic data costs, given by
\begin{subequations}\label{flow decoupled costs}
\begin{align}
&\textstyle f_{i^1}(x_{i^1}) = \min_{x_{i^2}} D_i(x_{i^1},x_{i^2}),\\
&\textstyle f_{i^2}(x_{i^2}) = \min_{x_{i^1}} D_i(x_{i^1},x_{i^2}),
\end{align}
\end{subequations}
where $D_i(x_{i^1},x_{i^2})$ is the discretized data cost, and regularize in each layer individually. This makes the two layers fully decouple into, essentially, a two independent stereo-like problems. At the same time, the coupled scheme~\cite{Shekhovtsov-Kovtun-Hlavac-08-CVIU}, passing messages between the two layers, differs merely in recomputing~\eqref{flow decoupled costs} for a reparametrized data costs in a loop. Our simplification then is not a principled limitation but an intermediate step.
%
%
%
\subsection{Discrete Optimization}\label{sec:discrete_optimization}
In this section we give an overview of a new method under development addressing problem~\eqref{emin-discrete} through its LP-relaxation dual.
In real-time applications like stereo and flow there seem to be a demand in methods performing fast approximate discrete optimization, preferably well-parallelizable. It has motivated a significant research. The challenge may sound as ``{\em best solution in a limited time budget}''.
\par
Well-performing methods, from local to global, range from cost volume filtering~\cite{Hosni-13}, semi-global matching (SGM)~\cite{hirschmuller2011semi} (has been implemented  in GPU and FPGA~\cite{Banz-10}), dynamic programming on spanning trees adjusting the cost volume~\cite{Bleyer-08} and more-global matching (MGM)~\cite{Facciolo-15} to the sequential dual block coordinates methods, such as TRW-S~\cite{Kolmogorov-06-convergent-pami}. Despite being called sequential, TRW-S exhibits a fair amount of parallelism in its computation dependency graph, which is exploited in the parallel GPU/FPGA implementations~\cite{ChoiR12,hurkatfast-15}.
At the same time SGM has been interpreted~\cite{Drory-14} as a single step of parallel TRW algorithm~\cite{Wainwright-MAP} developed for solving the dual. MGM goes further in this direction, resembling even more the structure of a dual solver: it combines together more messages but in a heuristic fashion and introducing more computation dependencies, in fact similar to TRW-S. It appears that all these approaches go somehow in the direction of a fast processing of the dual.
\par
We propose a new dual update scheme, which:
i) is a monotonous block-coordinate ascent; ii) performs as good as TRW-S for an equal number of iterations while having a comparable iteration cost;
and iii) offers more parallelism, better mapping to current massively parallel compute architectures.
Thus it bridges the gap between highly parallel heuristics and the best ``sequential'' dual methods without compromising on the speed and performance.
%
\par
On a higher level, the method is most easily presented in the dual decomposition framework. 
For clarity, let us consider a decomposition into two subproblems only (horizontal and vertical chains).
Consider minimizing the energy function $E(x)$ that separates as 
\begin{align}\label{energy-chain-chain}
E(x) = f(x) + g(x),
\end{align}
where $f,\ g \colon \L^\V \to \Real$ are chains. 
\paragraph{Primal Majorize-Minimize} 
Even before introducing the dual, we can propose applying the majorize-minimize method (a well-known optimization technique) to the primal problem in the form~\eqref{energy-chain-chain}. It is instructive for the subsequent presentation of the dual method and has an intriguing connection to it, which we do not yet fully understand.
\begin{definition} A modular function $\bar f$ is a {\em majorant} (upper bound) of $f$ if
$(\forall x) \ \ \bar f (x) \geq f(x)$, symbolically $\bar f \succeq f$. A modular minorant $\ubar f$ of $f$ is defined similarly.\footnote{$\ubar f$ reads ``f underbar''.}
\end{definition}
Noting that minimizing a chain function plus a modular function is easy, one could straightforwardly propose \Algorithm{PMM}, which alternates between majorizing one of $f$ or $g$ by a modular function and minimizing the resulting chain problem $\bar f + g$ (resp. $f + \bar g$). We are not aware of this approach being evaluated before. Somewhat novel, the sum of two chain functions is employed rather than, say, difference of submodular~\cite{Narasimhan-05}, but the principle is the same. To ensure monotonicity of the algorithm we need to pick a majorant $\bar f$ of $f$ which is exact in the current primal solution $x^k$ as in \cref{PMM-step1}. 
Then $f(x^{k+1}) + g(x^{k+1})  \leq \bar f(x^{k+1}) + g(x^{k+1}) \leq \bar f(x^{k}) + g(x^{k}) = f(x^{k}) + g(x^{k})$. Steps \ref{PMM-step3}-\ref{PMM-step4} are completely similar. 
\Algorithm{PMM} has the following properties:
\begin{itemize}
\item primal monotonous;
\item parallel, since, \eg, $\min_x (\bar f + g)(x)$ decouples over all vertical chains;
\item uses more information about subproblem $f$ than just the optimal solution (as in most primal block-coordinate schemes: ICM, alternating lines, \etc.).
\end{itemize}
The performance of this method highly depends on the strategy of choosing majorants. This will be also the main question to address in the dual setting.

\begin{algorithm}[tb]
\KwIn{Initial primal point $x^k$\;}
\KwOut{New primal point $x^{k+2}$\;}
$\bar f \succeq f$, $\bar f(x^k) = f(x^k)$\label{PMM-step1}\tcc*{Majorize}
$x^{k+1} \in \argmin\limits_{x} (\bar f + g)(x)$\label{PMM-step2}\tcc*{Minimize}
$\bar g \succeq g$, $\bar g(x^{k+1}) = g(x^{k+1})$\label{PMM-step3}\tcc*{Majorize}
$x^{k+2} \in \argmin\limits_{x} (f + \bar g)(x)$\label{PMM-step4}\tcc*{Minimize}
\caption{\protect\anchor[{\texttt{Primal\_MM}}]{PMM}}
\end{algorithm}

\paragraph{Dual Decomposition} Minimization of~\eqref{energy-chain-chain} can be written as
\begin{align}\label{DD-decompose}
\min_{x^1 = x^2} f(x^1) + g(x^2).
\end{align}
Introducing a vector of Lagrange multipliers $\lambda \in \Real^{\L \times \V}$ for the constraint $x^1 = x^2$, we get the Lagrange dual problem: 
\begin{align}\label{DD-dual}
\max_{\lambda} \hskip-0.5ex \Big[ \hskip-0.5ex \underbrace{\min_{x}\big( f(x) + \<\lambda, x\>\big)}_{D^1(\lambda)} + \underbrace{\min_{x}\big( g(x) - \<\lambda, x\>\big)}_{D^2(\lambda)} \Big].
\end{align}
The so-called {\em slave} problems $D^1(\lambda)$ and $D^2(\lambda)$ have the form of minimizing an energy function with a data cost modified by $\lambda$. The goal of the {\em master problem}~\eqref{DD-dual} is to balance the data cost between the slave problems such that their solutions agree. 
The slave problems are minima of finitely many functions linear in $\lambda$, the objective of the master problem~\eqref{DD-dual} $D(\lambda) = D^1(\lambda) + D^2(\lambda)$ is thus a concave piece-wise linear function. Problem~\eqref{DD-dual} is a concave maximization. 
However, since $x$ was taking values in a discrete space, there is only a weak duality: \eqref{DD-decompose} $\geq$ \eqref{DD-dual}. It is known that~\eqref{DD-dual} can be written as a linear program (LP), which is as difficult in terms of computation complexity as a general LP~\cite{Prusa-Werner-15-Universality}.
\paragraph{Dual Minorize-Maximize} In the dual, which is a maximization problem, we will speak of a minorize-maximize method. 
The setting is similar to the primal. We can {\em efficiently} maximize $D^1$, $D^2$ but not $D^1 + D^2$. Suppose we have an initial dual point $\lambda^0$ and let $x^0 \in \argmin_{x}(f + \lambda^0)(x)$ be a solution to the slave subproblem $D^1$, that is, $D^1(\lambda^0) = f(x^0)+\lambda^0(x^0)$.

\begin{proposition}\label{P:dual-minorant}
Let $\ubar f$ be a modular minorant of $f$ exact in $x^0$ and such that $\ubar f + \lambda^0 \geq D^1(\lambda^0)$ (component-wise). 
Then the function $\ubar{D}^1(\lambda) = \min_{x}(\ubar f+\lambda)(x)$ is a minorant of $D^1(\lambda)$ exact at $\lambda = \lambda^0$.
\end{proposition}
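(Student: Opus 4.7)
The plan is to verify two things: (i) that $\ubar D^1$ is a lower bound on $D^1$ everywhere, and (ii) that the bound is tight at $\lambda=\lambda^0$. Both follow from standard ``min of a smaller function is smaller'' reasoning, combined with the two hypotheses on $\ubar f$.

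For part (i), I will exploit that $\ubar f$ is a modular minorant of $f$, so $\ubar f(x)\le f(x)$ for every $x\in\L^\V$. Adding the same linear functional $\<\lambda,x\>$ to both sides preserves the inequality, giving $(\ubar f+\lambda)(x)\le (f+\lambda)(x)$ for every $x$ and every $\lambda$. Taking the minimum over $x$ is monotone, so
\begin{equation*}
\ubar D^1(\lambda)=\min_x(\ubar f+\lambda)(x)\ \le\ \min_x(f+\lambda)(x)=D^1(\lambda),
\end{equation*}
which is the required minorant property.

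For part (ii), I would use the two remaining assumptions. The exactness hypothesis $\ubar f(x^0)=f(x^0)$ together with $x^0\in\argmin_x(f+\lambda^0)(x)$ gives $(\ubar f+\lambda^0)(x^0)=(f+\lambda^0)(x^0)=D^1(\lambda^0)$, so $x^0$ witnesses the value $D^1(\lambda^0)$ in the minorant problem. Conversely, the componentwise inequality $\ubar f+\lambda^0\ge D^1(\lambda^0)$ means precisely that $(\ubar f+\lambda^0)(x)\ge D^1(\lambda^0)$ for every labeling $x$, so the minimum of $\ubar f+\lambda^0$ cannot drop below $D^1(\lambda^0)$. Combining,
\begin{equation*}
D^1(\lambda^0)\ \le\ \min_x(\ubar f+\lambda^0)(x)=\ubar D^1(\lambda^0)\ \le\ D^1(\lambda^0),
\end{equation*}
where the last inequality is instance of part (i). Hence equality holds throughout and $x^0$ attains the minimum in $\ubar D^1(\lambda^0)$ as well.

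There is essentially no hard obstacle here; the argument is a short sandwich of inequalities. The only point requiring care is the correct reading of the hypothesis ``$\ubar f+\lambda^0\ge D^1(\lambda^0)$ (component-wise)'' — one must understand it as a uniform lower bound on the modular function $\ubar f+\lambda^0$ evaluated at any labeling, so that taking a minimum over $x$ still respects the bound. Once this is parsed, the exactness hypothesis $\ubar f(x^0)=f(x^0)$ automatically provides the labeling that attains this lower bound, pinning $\ubar D^1(\lambda^0)$ to $D^1(\lambda^0)$.
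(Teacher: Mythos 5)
Your proof is correct and follows essentially the same route as the paper: monotonicity of $\min$ under the pointwise bound $\ubar f\le f$ for the minorant property, and the sandwich $D^1(\lambda^0)\le\ubar D^1(\lambda^0)\le D^1(\lambda^0)$ from the component-wise hypothesis for exactness. Your additional remark that $x^0$ itself attains the minimum of $\ubar f+\lambda^0$ (via the exactness of $\ubar f$ at $x^0$) is a harmless refinement the paper leaves implicit.
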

\begin{proof}
Since $\ubar f (x) \leq f(x)$ for all $x$ it follows that $\min_{x}(\ubar{f}+\lambda)(x) \leq \min_{x}(f+\lambda)(x)$ for all $\lambda$ and therefore $\ubar{D}^1$ is a minorant of $D^1$. 
Next, on one hand we have $\ubar{D}^1(\lambda^0) \leq D^1(\lambda^0)$ and on the other, $D^1(\lambda^0) \leq (\ubar f+\lambda^0)(x)$ for all $x$ and thus $D^1(\lambda^0) \leq \ubar{D}^1(\lambda^0)$.
\end{proof}
We have constructed a minorant of $D^1$ which is itself a (simple) piece-wise linear concave function. The maximization step of the minorize-maximize is to solve
\begin{align}\label{DMM:max-phase}
\max_\lambda (\ubar D^1(\lambda) + D^2(\lambda)).
\end{align}
\begin{proposition}
$\lambda^* = {-}\ubar f$ is a solution to ~\eqref{DMM:max-phase}.
\end{proposition}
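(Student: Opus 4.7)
The plan is to pin down the value $\ubar{D}^1(\lambda^*) + D^2(\lambda^*)$ by direct substitution and then exhibit a simple $\lambda$-independent upper bound on $\ubar{D}^1(\lambda) + D^2(\lambda)$ that matches this value. Nothing more is needed, since modularity of $\ubar{f}$ will make the usual weak-duality ``sum of mins $\leq$ min of sum'' bound tight automatically.

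For the upper bound, I take any $\lambda \in \Real^{\L\times\V}$ and apply the elementary inequality $\min_x h_1(x) + \min_x h_2(x) \leq \min_x (h_1(x)+h_2(x))$ with $h_1(x) = \ubar{f}(x) + \<\lambda,x\>$ and $h_2(x) = g(x) - \<\lambda,x\>$. The $\<\lambda, x\>$ terms cancel inside the combined minimum, leaving
\begin{equation*}
\ubar{D}^1(\lambda) + D^2(\lambda) \,\leq\, \min_x \big(\ubar{f}(x) + g(x)\big).
\end{equation*}
The right-hand side does not depend on $\lambda$, so it is a uniform upper bound on the dual objective in \eqref{DMM:max-phase}.

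For the value at $\lambda^* = -\ubar{f}$, I identify the modular function $\ubar{f}$ with its cost vector so that $\ubar{f}(x) = \<\ubar{f}, x\>$ and $\ubar{f}$ can be added to a Lagrange multiplier in $\Real^{\L\times\V}$. Substituting gives $\ubar{D}^1(-\ubar{f}) = \min_x \<\ubar{f} - \ubar{f}, x\> = 0$, while $D^2(-\ubar{f}) = \min_x\big(g(x) + \<\ubar{f},x\>\big) = \min_x (\ubar{f}+g)(x)$. Summing these two quantities recovers exactly the upper bound from the previous step, so $\lambda^*$ attains the maximum of $\ubar{D}^1 + D^2$.

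I do not expect a real obstacle: both ingredients are one-liners. The only points requiring care are the sign convention in $D^2(\lambda) = \min_x(g(x) - \<\lambda, x\>)$ (this is what forces the minus sign in $\lambda^* = -\ubar{f}$) and the double reading of $\ubar{f}$ as both a function on $\L^\V$ and as a vector in $\Real^{\L\times\V}$, which is legitimate precisely because $\ubar{f}$ is modular. One might also remark that $\lambda^*$ need not be unique, since the dual objective is piecewise linear; the proof establishes only that $-\ubar{f}$ belongs to the argmax, which is what the statement claims.
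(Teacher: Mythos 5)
Your proof is correct and follows essentially the same route as the paper's: the paper likewise evaluates the objective at $\lambda^*=-\ubar f$ to get $\min_x(\ubar f+g)(x)$ and then bounds $\ubar D^1(\lambda)+D^2(\lambda)\leq\min_x(\ubar f+\lambda+g-\lambda)(x)=\min_x(\ubar f+g)(x)$ for all $\lambda$; you merely present the two steps in the opposite order. Your added remarks on the sign convention and on reading the modular $\ubar f$ as a cost vector are accurate but not substantively different.
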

\begin{proof}
Substituting $\lambda^*$ into the objective~\eqref{DMM:max-phase} we obtain $\ubar D^1(\lambda^*) + D^2(\lambda^*) = \min_{x}(\ubar f - \ubar f)(x) + D^2({-}\ubar f) = \min_{x}(\ubar f + g)(x)$.
This value is the maximum because $\ubar D^1(\lambda) + D^2(\lambda)  = \min_{x}(\ubar f+\lambda)(x) + \min_{x}(g - \lambda)(x)  \leq \min_{x}(\ubar f+\lambda + g - \lambda)(x) = \min_{x}(\ubar f + g)(x)$.
\end{proof}
Note, for the dual point $\lambda = -\ubar f$, in order to construct a minorant of $D^2$ (similarly to \cref{P:dual-minorant}) we need to find a solution to the second slave problem, 
\begin{align}
x^1 \in \argmin(g - \lambda)(x) = \argmin(\ubar f + g)(x).
\end{align}
We obtain \Algorithm{DMM} with the following properties:
\begin{itemize}
\item It builds the sequence of dual points given by $\lambda^{2t} = \ubar g^{2t}, \lambda^{2t+1} = -\ubar f^{2t+1}$ and the dual objective does not decrease on each step;
\item The minimization subproblems and minorants are decoupled (can be solved in parallel) for all horizontal (resp. vertical) chains;
\item When provided good minorants (see below) the algorithm has same fixed points as TRW-S~\cite{Kolmogorov-06-convergent-pami};
\item Updating only a single component $\lambda_i$ for a pixel $i$ is a monotonous step as well, therefore the algorithm is a {\em parallel block-coordinate ascent}.
\end{itemize}
%
\begin{algorithm}[tb]
\KwIn{Initial dual point $\ubar g^k$\;}
\KwOut{New dual point $\ubar g^{k+2}$\;}
$x^{k} \in \argmin_{x} (f + \ubar g^{k} )(x)$\label{DMM-step1}\tcc*{Minimize}
\tcc{Minorize}
$\ubar f^{k+1} \preceq f$, $\ubar f^{k+1}(x^k) = f(x^k)$, $\ubar f^{k+1} + \ubar g^{k} \geq f(x^k) + \ubar g^{k} (x^k)$\label{DMM-step2}\;
$x^{k+1} \in \argmin_{x} (\ubar f^{k+1} + g)(x)$\label{DMM-step3}\tcc*{Minimize}
\tcc{Minorize}
$\ubar g^{k+2} \preceq g$, $\ubar g^{k+2}(x^{k+1}) = g(x^{k+1})$, $\ubar f^{k+1} + \ubar g^{k+2} \geq \ubar f^{k+1}(x^{k+1}) + g (x^{k+1})$\;
\caption{\protect\anchor[{\texttt{Dual\_MM}}]{DMM}}
\end{algorithm}
Notice also that \DMM and \PMM are very similar, nearly up to replacing minorants with majorants. The sequence $\{E(x^k)\}_k$ is monotonous in \Algorithm{PMM} but not in \Algorithm{DMM}.
\paragraph{Good and Fast Minorants}
The choice of the minorant in \DMM is non-trivial as there are many, which makes it sort of a secrete ingredient. \Figure{fig:alg-uniform} illustrates two of the possible choices. The {\em naive} minorant for a chain problem $f+\lambda$ is constructed by calculating its min-marginals and dividing by chain length to ensure that the simultaneous step is monotonous (\cf tree block update algorithm of~\citet[Fig. 1]{Sontag-09}). The {\em uniform} minorant is found through the optimization procedure that tries to build the tightest modular lower bound, by increasing uniformly all components that are not yet tight. The details are given in~\Section{sec:discrete-details}. In practice, we build fast minorants, which try to approximate the uniform one using fast message passing operations. Parallelization of decoupled chains allowed us to achieve an implementation which, while having the same number of memory accesses as TRW-S (including messages / dual variables), saturates the GPU memory bandwidth, $\sim$ 230GB/s.\footnote{This is about 10 times faster than reported for FPGA implementation~\cite{ChoiR12} of TRW-S.} This allows to perform 5 iterations of \Algorithm{DMM} for an image 512$\times$512 and 64 labels at the rate of about 30 fps.

\begin{figure}[tb]
\centering
\includegraphics[width=\linewidth]{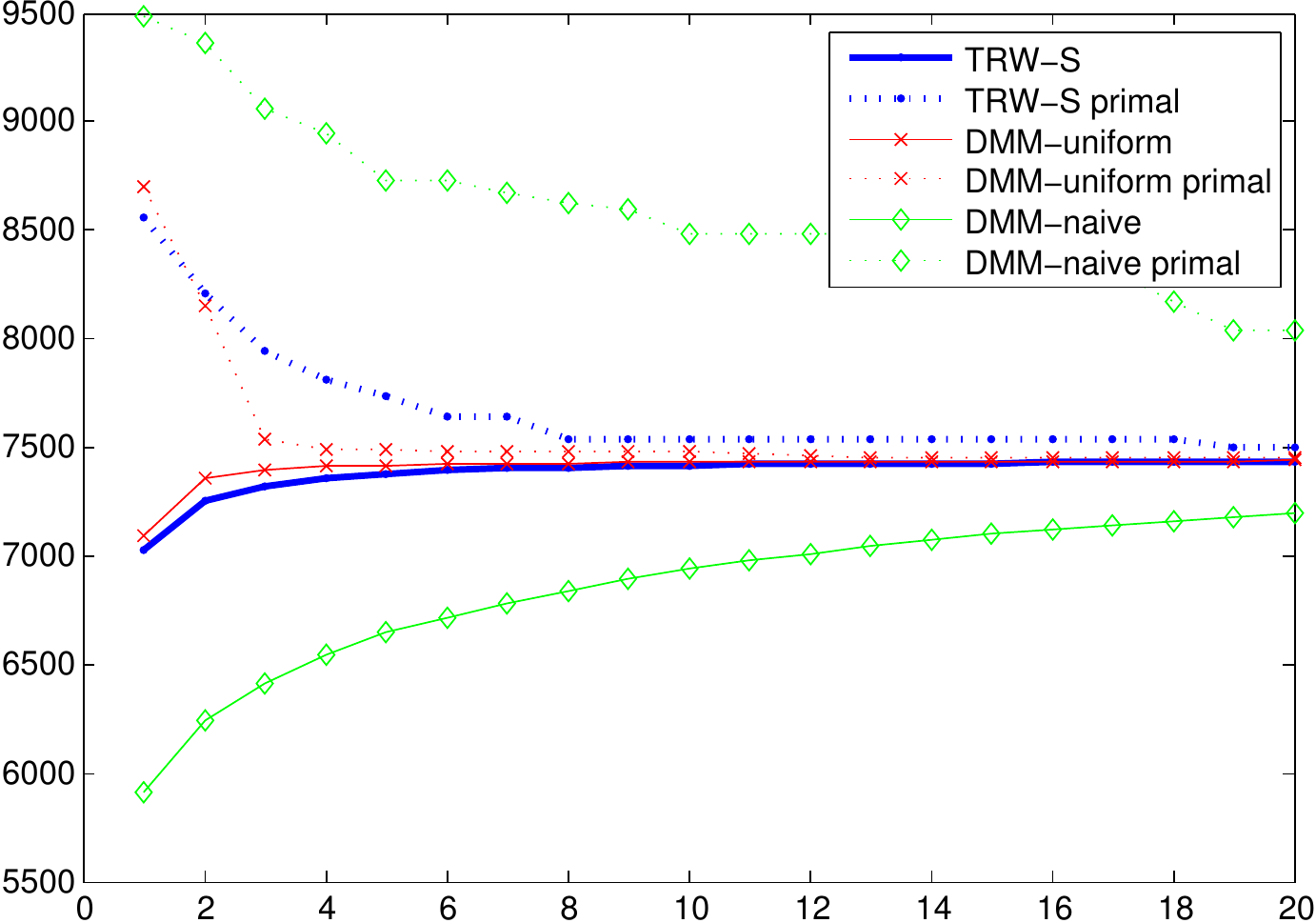}
\caption{Lower bounds and best primal solutions by TRW-S and by \DMM with a naive and a uniform minorants. The problem is a small crop from stereo of size 40$\times 40$, 16 labels, truncated linear regularization. On the $x$-axis one iteration is a forward-backward pass of TRW-S \vs iteration of \DMM (equal number of updates per pixel). With a good choice of minorant, \DMM can perform even better than the sequential baseline in terms of iterations. Parallelizing it can be expected to give a direct speedup. 
\label{fig:alg-uniform}}
\end{figure}


\subsection{Continuous Refinement}
\label{sec:continuous_refinement}
In this section we describe the continuous refinement method, which is based on variational energy minimization. The goal of this step is to refine the output of the optimization method described in \Cref{sec:discrete_optimization} which is discrete in label-space.

To that end, it is important to minimize the same energy in both formulations. Considering the
optimization problem in \cref{eqn:mrf_formulation}, we are seeking to minimize a non-convex,
truncated norm together with a non-convex data term. For clarity, let us write down the problem again:
\begin{equation}\label{eqn:compact_formulation}
\min_{u\in U}  D(u) + R(A u) .
\end{equation}

\paragraph{Non-Convex Primal-Dual}
Efficient algorithms exist to solve \cref{eqn:compact_formulation} in case both $D(u)$ and $R(Au)$ are convex (but possibly non-smooth), \eg the primal-dual solver of Chambolle and Pock \cite{Chambolle2011}. \citet{KolmogorovPR15} solves \cref{eqn:compact_formulation} for a truncated total variation regularizer using a splitting into horizontal and vertical 1D problems and applying \cite{Chambolle2011} to the Lagrangian function. Here we will use a recently proposed extension to \cite{Chambolle2011} by \citet{Valkonen2014}. 
He considers problems of the form $\min_x \mathcal{G}(x) + \mathcal{F}(\mathcal{A}(x))$, \ie of the same structure as \cref{eqn:compact_formulation}, where $\mathcal{G}$ and $\mathcal{F}$ are convex, $\mathcal{G}$ is differentiable and $\mathcal{A}(u)$ is a twice differentiable but possibly non-linear operator.
In the primal-dual formulation, the problem is written as
\begin{equation}
\label{eqn:cont_nonlin_pd}
  \min_x \max_y \big[ \mathcal{G}(x) + \langle \mathcal{A}(x),y\rangle -\mathcal{F}^*(y) \big],
\end{equation}
where $^*$ is the convex conjugate. Valkonen 
proposes the following modified primal-dual hybrid gradient method: 
\begin{subequations}\label{eqn:cont_iterates}
\begin{align}
  x^{k+1} =& (I+\tau \partial \mathcal{G})^{-1}(x^k-\tau \left[\nabla \mathcal{A}(x^k)\right]\T y^k)\\
  y^{k+1} =& (I+\sigma \partial \mathcal{F}^*)^{-1}(y^k+\sigma\mathcal{A}(2 x^{k+1}- x^k)) .
\end{align}
\end{subequations}

\paragraph{Reformulation}
In order to apply method \cite{Valkonen2014}, we will reformulate the non-convex problem \cref{eqn:compact_formulation} to the form~\eqref{eqn:cont_nonlin_pd}. We start by formulating the regularizer $R(Au)$ as a \emph{difference of convex functions}: 
$R(Au)=R_{+}(Au)-R_{-}(Au)$, where $R_{+}$ and $R_{-}$ are convex. 
The primal-dual formulation of \eqref{eqn:compact_formulation} then reads
\begin{align}\label{eqn:compact_formulation_duals}
   \min_u \Big[ &\max_p (\langle Au,p\rangle - R_{+}^*(p)) \\
         & \notag + \max_q (\langle Au,q\rangle - R_{-}^*(q)) + D(u)\Big].
\end{align}
Because $\min_x -f(x) = -\max_x f(x)$, \eqref{eqn:compact_formulation_duals} equals 
\begin{align}
  \min_u \Big[ & \max_p ( \langle Au,p\rangle - R_{+}^*(p) ) + \\
         & \notag + \min_q (-\langle Au,q\rangle + R_{-}^*(q)) + D(u) \Big].
\end{align}
Grouping terms we arrive at
\begin{equation}\label{eqn:cont_problem}
	\min_{u,q} \max_p \Big[ \langle Au,p - q\rangle - R_{+}^*(p) + R_{-}^*(q) + D(u) \Big].
\end{equation}
The problem now arises in minimizing the bilinear term $\<A u, q\>$ in \eqref{eqn:cont_problem} in both $u$ and $q$. 
We thus move this term into the nonlinear operator $\mathcal{A}(x)$ and rewrite \cref{eqn:cont_problem} as
\begin{multline}
  \underbrace{\min_{u,q}}_{x} \underbrace{\max_{p,d=1}}_{y}
  \left\langle 
    \underbrace{\begin{bmatrix}
      A u\\
      - \langle A u,q\rangle
    \end{bmatrix}}_{\mathcal{A}(x)},
    \begin{bmatrix}
      p\\
      d
    \end{bmatrix}
  \right\rangle + \underbrace{R_{-}^*(q) + D(u)}_{\mathcal{G}(x)}\\ - 
  \underbrace{R_{+}^*(p)}_{\mathcal{F}^*(y)}
\end{multline}
by introducing a dummy variable $d=1$.

\paragraph{Implementation Details}
The gradient of $\A$ needed by iterates~\eqref{eqn:cont_iterates} is given by
\begin{equation}
 \nabla \mathcal{A}(x) = 
 \begin{bmatrix}
  A & 0 \\
  -A\T q & -A u
 \end{bmatrix}.
\end{equation}
The regularization function $r$ is represented as a difference of two convex functions (see \Figure{fig:regularizer_cont}):
\begin{align}\label{r-decompose}
r (t) =  r_{\epsilon,\delta}(t) - r_{0,(C + \delta -\epsilon \delta)}(t),
\end{align}
where
\vskip-1em
\begin{equation}
 r_{\alpha,\beta}(t)=\begin{cases}
                            \alpha |t| & \text{if } |t|\leq \beta\\
                            |t|-\beta(1-\alpha) & \text{else}
                          \end{cases}
\end{equation}
is convex for $\alpha \leq 1$.
Convex functions $R_{+}(Au)$ and $R_{-}(Au)$ are defined by decomposition~\eqref{r-decompose} and~\eqref{regularizer-form}.

To compute the proximal map $(I+\sigma \partial \mathcal{F}^*)^{-1}(\hat{y})$ we first need the convex conjugate of $\omega_{ij} r_{\alpha,\beta}(t)$. It is given by $(\omega_{ij} r_{\alpha,\beta})^*(t^*) =$
\begin{equation}
  \begin{cases}
    \max(0,\beta|t^*|- \omega_{ij}\alpha \beta) & \text{if } \alpha<|t^*|<\omega_{ij}\\
    \infty & \text{else}
  \end{cases}\ .
\end{equation}
The proximal map for $(\omega_{ij}r_{\alpha,\beta})^*$ at $t^* \in \Real$ is given by 
$\bar t = \operatorname{clamp}(\pm \omega_{ij}, t')$, where $\operatorname{clamp}(\pm \omega_{ij},\cdot)$ denotes a clamping to the interval $[-\omega_{ij},\omega_{ij}]$ and 
\begin{equation}
\label{eqn:pprox}
  t' = \left\{\begin{aligned}
    & t^* & \text{if } |t^*| \leq \alpha \omega_{ij}\\
		& \max(\alpha\omega_{ij},|t^*|{-}\beta\sigma)\sign( t^*)\hskip-1cm
		& \text{else}.
	\end{aligned}\right.
\end{equation}
%
Proximal map $(I+\sigma \partial \mathcal{F}^*)^{-1}(\hat{y})$ is calculated by applying expression \cref{eqn:pprox} component-wise to $\hat y$. The proximal map $(I+\tau \partial \mathcal{G})^{-1}$ depends on the choice of the data term $D(u)$ and will thus be defined in \cref{sec:applications}.



\section{Applications}
\label{sec:applications}
\subsection{Stereo Reconstruction}
\label{sec:app_stereo}
For the problem of estimating depth from two images, we look at a setup of two calibrated and synchronized cameras. We assume that the input images to our method have been rectified according to the calibration parameters of the cameras. 
We aim to minimize the energy \cref{eqn:mrf_formulation} where $u$ encodes the disparity in $x$-direction. The data term measures the data fidelity between images $I_1$ and $I_2$, warped by the disparity field $u$. As a data term we use the Census Transform \cite{Zabih1994} computed on a small local patch in each image. The cost is given by the pixel-wise Hamming distance on the transformed images.
$D(u)$ is non-convex in the argument $u$ which makes the optimization problem in \cref{eqn:mrf_formulation} intractable in general.

We start by minimizing \cref{eqn:mrf_formulation} using the discrete method (\Section{sec:discrete_optimization}) in order to obtain an initial solution $\mathring{u}$. We approximate the data term around the current point $\mathring{u}$ by a piecewise linear convex function $\tilde{D}(u) = $
\begin{equation}
  D(\mathring{u}) + \delta_{[\mathring{u}-h,\mathring{u}+h]}(u) + 
  \begin{cases}
    s_1 (u-\mathring{u}) & \IF u\leq \mathring{u}\\
    s_2 (u-\mathring{u}) & \OTHERWISE
  \end{cases}
\end{equation}
with  $s_{1} = \frac{D(\mathring{u}+h) - D(\mathring{u})}{h}$ and $s_{2} =
\frac{D(\mathring{u})-D(\mathring{u}+h)}{h}$ for a small $h$. To ensure convexity, we set $s_1 = s_2 =
\frac{s_1+s_2}{2}$ if $s_2<s_1$. The indicator function $\delta$ is added to ensure that the solution
stays within $\mathring{u}\pm h$ where the approximation is valid.
We then apply the continuous method (\Section{sec:continuous_refinement}). The proximal map $\bar u = (I+\tau \partial \mathcal{G})^{-1}(\hat u)$ needed by the algorithm \cref{eqn:cont_iterates} for the approximated data term expresses as the pointwise soft-thresholding 
\begin{equation*}
	\bar{u}_i = \operatorname{clamp}\left (\mathring{u}_i\pm h, \hat{u}_i - 
  \begin{cases}
    \tau s_{1,i} & \IF \hat{u}_i>\mathring{u}_i+\tau s_{1,i}\\
    \tau s_{2,i} & \IF \hat{u}_i<\mathring{u}_i+\tau s_{2,i}\\
    0 & \OTHERWISE
  \end{cases}\right )\\
\end{equation*}
In practice, the minimization has to be embedded in a {\em warping framework}: after optimizing for $n$ iterations, the data term is approximated anew at the current solution $u$.
\subsection{Optical Flow}
\label{sec:app_flow}
The optical flow problem for two images $I_1,I_2$ is posed again as model \cref{eqn:mrf_formulation}. 
In contrast to stereo estimation, we now have $u_i\in\mathbb{R}^2$
encoding the flow vector. 
For the discrete optimization step (\Section{sec:discrete_optimization}) the flow problem is decoupled into two independent stereo-like problems as discussed in \Section{sec:discretization}.

For the continuous refinement step, the main problem is again the non-convexity of the data term. 
Instead of a convex approximation with two linear slopes
we build a quadratic approximation, now in 2D, following~\cite{Werlberger2010}. 
The approximated data term reads $\tilde D_i(u_i) = \delta_{[\mathring{u}_i-h,\mathring{u}_i+h]}(u_i)+$
\begin{equation}
  \label{eq:quadfit}
  D_i(\mathring{u}_i) + L_i\T(u_i-\mathring{u}_i)+\frac{1}{2}(u_i-\mathring{u}_i)\T Q_i (u_i-\mathring{u}_i),
\end{equation}
where $L_i\in\mathbb{R}^2$ and $Q_i\in\mathbb{R}^{2\times 2}$ are finite difference approximations
of the gradient and the Hessian with stepsize $h$. Convexity of \cref{eq:quadfit} is ensured by retaining only positive-semidefinite part of $Q_i$ as in~\cite{Werlberger2010}.
%
%
The proximal map $\bar u = (I+\tau\partial\mathcal{G})^{-1}(\hat u)$ for data term \cref{eq:quadfit} is given point-wise by 
\begin{equation}
  \label{eq:prox_flow}
	\bar u_i^k = \operatorname{clamp}\left(\mathring{u}_i^k \pm h,
    \frac{\hat u_i^k+\tau(Q_{i}\mathring{u}_i-L_{i})^k}{1+\tau L_{i}^k}\right ).
\end{equation}
Optimizing~\cref{eqn:mrf_formulation} is then performed as proposed in~\Section{sec:continuous_refinement}. 



\section{Experiments}

\subsection{Stereo Reconstruction}
We evaluate our proposed real-time stereo method on datasets where Ground-Truth data is available as well as on images captured using a commercially available stereo camera. \subsubsection{Influence of Truncated Regularizer}
We begin by comparing the proposed method to a simplified version that does not use a truncated norm as regularizer but a standard Total Variation. We show the effect of this change in \cref{fig:comp_global}, where one can observe much sharper edges, when using a robust norm in the regularization term. On the downside it is more sensitive to outliers, which however can be removed in a post-processing step like a two-side consistency check.

\begin{figure}[tb]
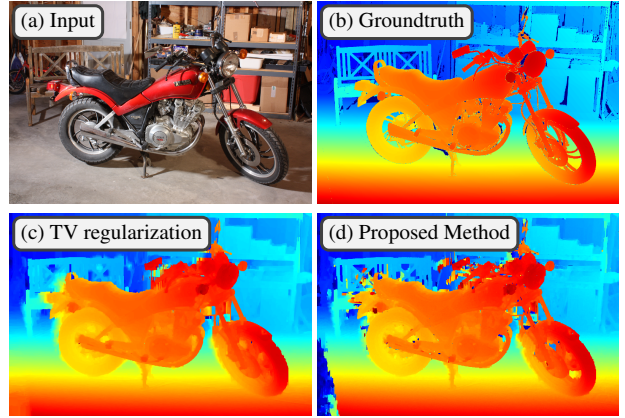

  \begin{center}
	\begin{tabular}{C{0.5\linewidth}C{0.5\linewidth}}
	\labelgraphics[width=0.98\linewidth]{middlebury_stereo/input}{(a) Input}&
	\labelgraphics[width=0.98\linewidth]{middlebury_stereo/gt}{(b) Groundtruth}\\
	\labelgraphics[width=0.98\linewidth]{middlebury_stereo/notrunk}{(c) TV regularization}&
	\labelgraphics[width=0.98\linewidth]{middlebury_stereo/slp_ttv}{(d) Proposed Method}
	\end{tabular}
  \caption{Influence of the robust regularizer in the continuous refinement on stereo reconstruction quality.}
  \label{fig:comp_global}
  \end{center}
\end{figure}

\subsubsection{Live Dense Reconstruction}
\label{sec:live_reconstruction}
To show the performance of our stereo matching method in a real live setting, we look at the task of creating a live dense reconstruction from a set of depth images. To that end, we are using a reimplementation of \emph{KinectFusion} proposed by Newcombe et al. \cite{Newcombe2011} together with the output of our method. This method was originally designed to be used with the RGBD output of a Microsoft Kinect and tracks the 6 DOF position of the camera in real-time. For the purpose of this experiment we replace the Kinect with a Point Grey Bumblebee2 stereo camera. \emph{KinectFusion} can only handle relatively small camera movements between images, so a high framerate is essential. We set the parameters to our method to achieve a compromise between highest quality and a framerate of $\approx 4-5$ fps: camera resolution $640\times 480$, 128 disparities, 4 iterations of \DMM, 5 warps and 40 iterations per warp of the continuous refinement. 

\paragraph{Influence of Continuous Refinement}
The first stage of our reconstruction method, \DMM, already delivers high quality disparity images that include details on fine structures and depth discontinuities that are nicely aligned with edges in the image. In this experiment we want to show the influence of the second stage, the continuous refinement, on the reconstruction quality of KinectFusion. To that end we mount the camera on a tripod and collect 300 depthmaps live from our full method and 300 frames with the continuous refinement switched off. By switching off the camera tracking, the final reconstruction will show us the artifacts produced by the stereo method. \Cref{fig:quadfit} depicts the result of this comparison. One can easily see that the output of the discrete method contains fine details, but suffers from staircasing artifacts on slanted surfaces due to the integer solution. The increase in quality due to the refinement stage can be especially seen on far away objects, where a disparity step of 1 pixel is not enough to capture smooth surfaces.

\begin{figure}[tb]
  \begin{center}
	\begin{tabular}{C{0.5\linewidth}C{0.5\linewidth}}
	\labelgraphics[width=0.98\linewidth]{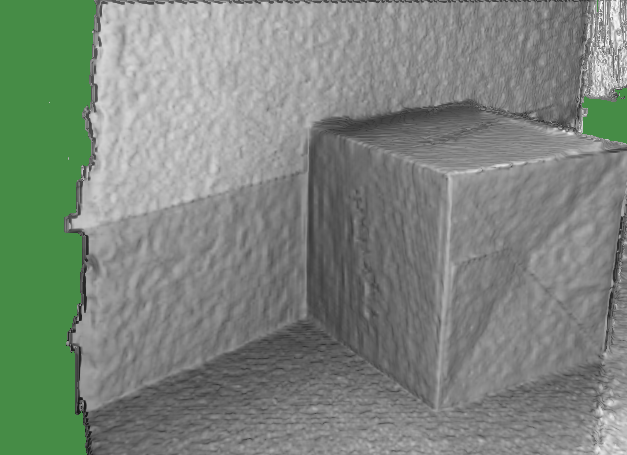}{(a) Refinement}&
	\labelgraphics[width=0.98\linewidth]{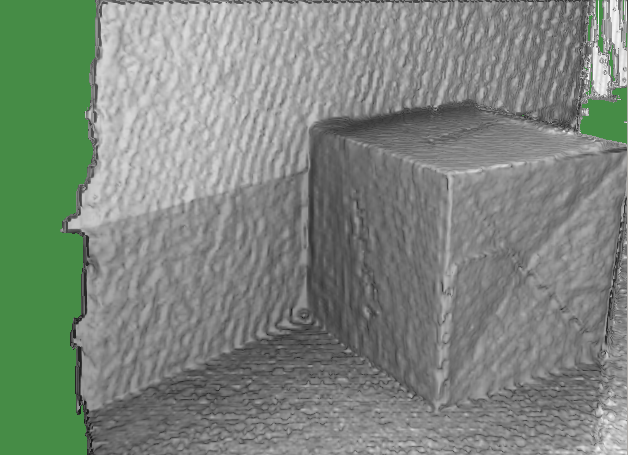}{(b) No Refinement}
	\end{tabular}
  \caption{Influence of continuous refinement on the reconstruction quality of
  KinectFusion.}
  \label{fig:quadfit}
  \end{center}
\end{figure}

\paragraph{Timing}
To show the influence of the individual steps in our stereo method on runtime, we break down the total time of $\approx 140$ ms per frame in \Cref{tab:timing}. Those timings have been achieved using a PC with 32 GB RAM with a NVidia 980GTX, running Linux.

\begin{table}[t]
\begin{center}
	\begin{tabularx}{\columnwidth}{*{3}{X}r@{}}
	  \toprule
		{\small\textbf{Cost Vol.}} & 
		{\small\textbf{Discrete}} & {\small\textbf{Cont. Ref.}} & 
		{\small\textbf{Total}} \\
	  \midrule
		27 ms & 73 ms  & 39 ms & \textbf{139 ms} \\
		\bottomrule
	\end{tabularx}
	\caption{Runtime analysis of the individual components of our stereo matching
	method. Details regarding computing hardware and parameters are in the text. In case of the full left-right check procedure the total computation time doubles.}
	\label{tab:timing}
\end{center}
\end{table}

\begin{figure}[t]
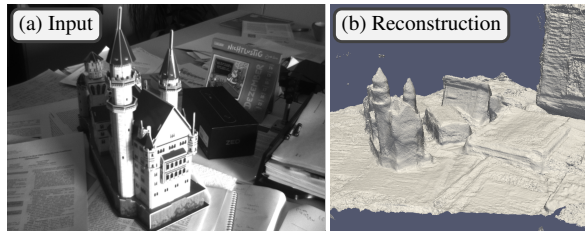

  \centering
	\setlength{\tabcolsep}{0pt}
	\setlength{\fboxsep}{0pt}
	\begin{tabular}{cc}
	\labelgraphics[height=0.37\linewidth]{live_dense/input_left}{(a) Input}&\,%
	\labelgraphics[height=0.37\linewidth]{live_dense/output}{(b) Reconstruction}
	\end{tabular}
  \caption{Qualitative result of reconstructing a desktop scene using KinectFusion\footref{sup}.}
  \label{fig:qualitative_stereo}
\end{figure}

\paragraph{Qualitative Results}
To give an impression about the quality of the generated depthmaps and the speed of our method, we run our full algorithm and aim to reconstruct a desktop scene with a size of $1\times 1\times 1$ meters and show some renderings in \cref{fig:qualitative_stereo}. To better visualize the quality of the geometry, the model is rendered without texture\footlabel{sup}{We point the interested reader to a video that shows the reconstruction pipeline in real-time: \url{http://gpu4vision.icg.tugraz.at/videos/cvww16.mp4}}.

\subsection{Optical Flow}

In this section we show preliminary results of our algorithm applied to optical flow.
A further improvement in quality can be expected by exploiting the coupled scheme~\cite{Shekhovtsov-Kovtun-Hlavac-08-CVIU} in the discrete optimization, as discussed in \cref{sec:discretization}.
%
As
depicted in \Figure{fig:results_flow}, our method is able to deliver reasonable results on a
variety of input images. We deliberately chose scenes that contain large motion as well as small
scale objects, to highlight the strengths of the discrete-continuous approach. 
For comparison, we use a state-of-the-art purely continuous variational optical flow algorithm \cite{Werlberger2012}. 
The runtime of our method is $2$s for an image of size $640\times480$.

\begin{figure}[tb]
  \begin{center}
	\setlength{\tabcolsep}{0pt}
	\begin{tabular}{ccc}
		\labelgraphics[width=0.31\linewidth]{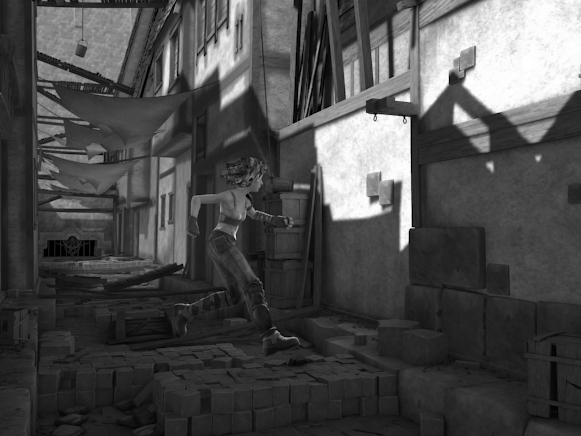}{Inputs}&\,%
		\includegraphics[width=0.31\linewidth]{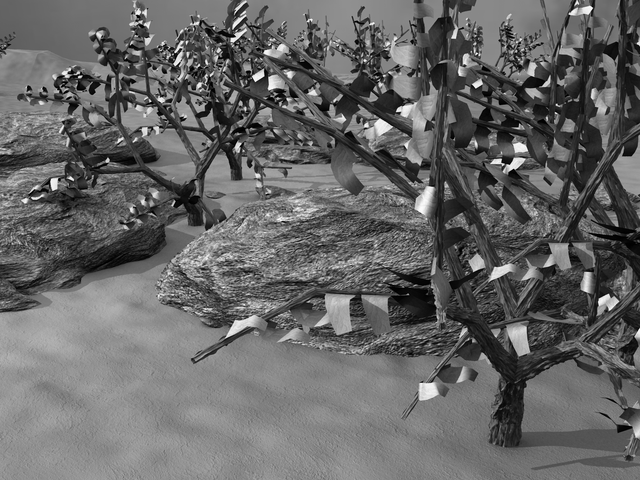}&\,%
		\includegraphics[width=0.31\linewidth]{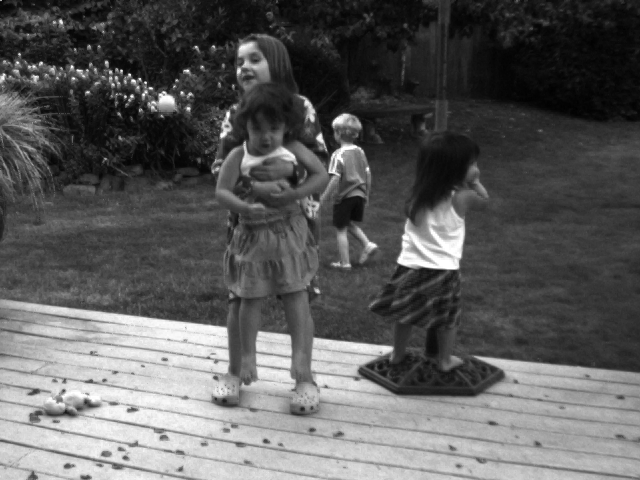}%
	\end{tabular}
	\begin{tabular}{cc}
		\labelgraphics[width=0.47\columnwidth]{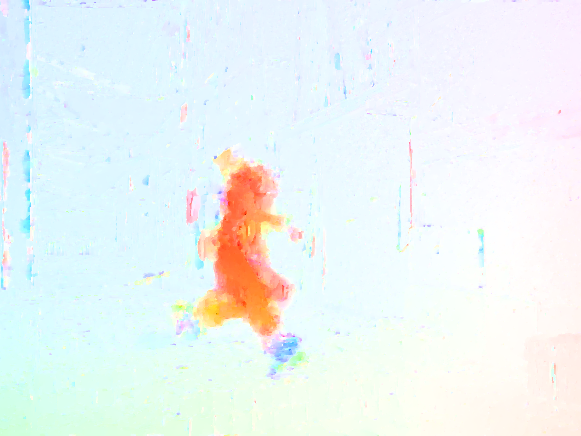}{~\citet{Werlberger2012}}&\ %
		\labelgraphics[width=0.47\columnwidth]{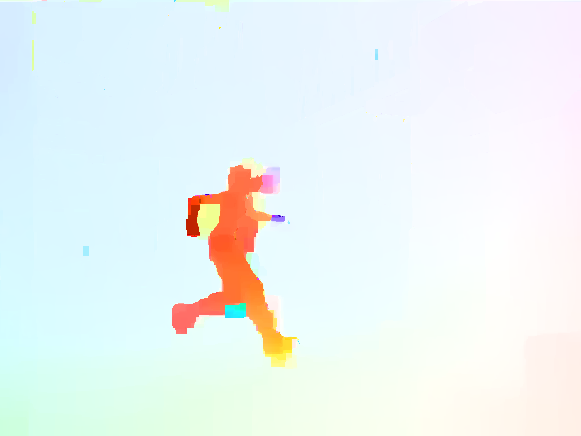}{Combined}\\
		\includegraphics[width=0.47\columnwidth]{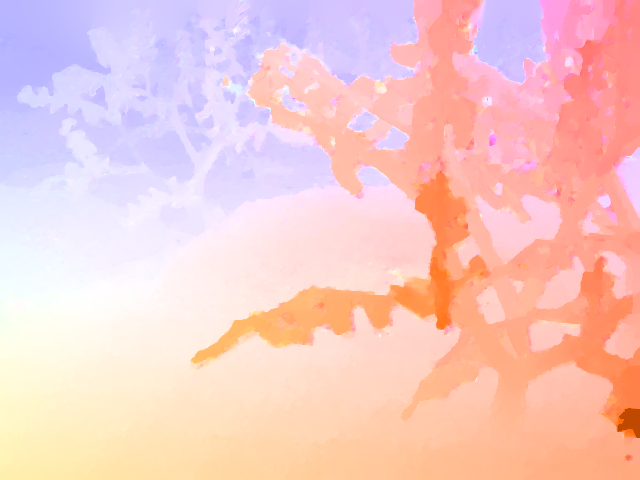}&\ %
		\includegraphics[width=0.47\columnwidth]{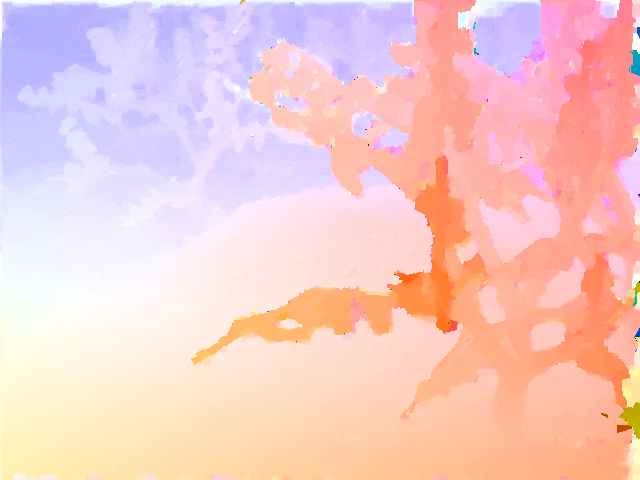}\\
		\includegraphics[width=0.47\columnwidth]{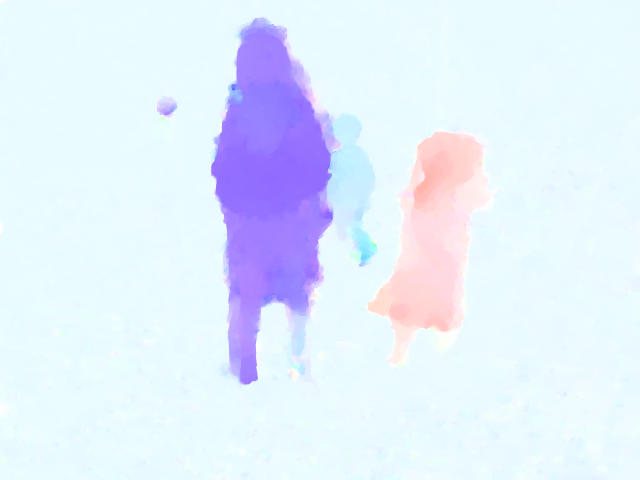}&\ %
		\includegraphics[width=0.47\columnwidth]{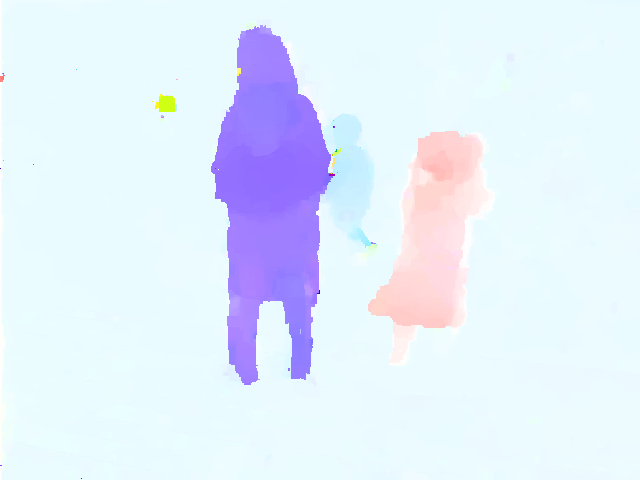}
\end{tabular}		
    \caption{Subjective comparison of variational approach~\cite{Werlberger2012} (left) with our combined method (right).
		Top row show input images, one from a pair. 
			Both methods use the same data term. Parameters of both algorithms have been tuned by hand to deliver good results. Note that for~\cite{Werlberger2012} it is often impossible to get sharp motion boundaries as well as small scale details, despite a very strong data term (\eg artifacts in left image, first row).}
    \label{fig:results_flow}
  \end{center}
\end{figure}


\section{Conclusion}
The current results demonstrate that it is feasible to solve dense image matching problems using global optimization methods with a good quality in real time.
We have proposed a highly parallel discrete method, which even when executed sequentially, is competitive with the best sequential methods. As a dual method, we believe, it has a potential to smoothly handle more complex models in the dual decomposition framework and is in theory applicable to general graphical models.
When the solution is sufficiently localized, continuous representation increases the accuracy of the model as well as optimization speed.
In the continuous optimization, we experimented with non-convex models and showed a reduction allowing to handle them with the help of a recent non-linear primal-dual method. This in turn allowed to speak of a global model to be solved by a discrete-continuous optimization.
\par
Ideally, we would like to achieve a method, which, when given enough time, produces an accurate solution, and in the real time setting gives a robust result. We plan further to improve on the model. A vast literature on the topic suggest that modeling occlusions and using planar hypothesis can be very helpful. At the same time, we are interested in a tighter coupling of discrete and continuous optimization towards a globally optimal solution.

\ifcvwwfinal{
\section*{Acknowledgements}
This work was supported by the research initiative Mobile Vision with funding
from the AIT and the Austrian Federal Ministry of Science, Research and Economy
HRSM programme (BGBl. II Nr. 292/2012).
}
\fi

{\small
\bibliographystyle{apalike}
\bibliography{bib/strings,bib/discrete,bib/books,bib/continuous,bib/suppl}
}

\appendix
\section{Details of Dual MM}\label{sec:discrete-details}
In this section we specify details regarding computation of minorants in \DMM. The minorants are computed using message passing and we'll also need the notion of min-marginals.
\subsection{Min-Marginals and Message Passing}
\begin{definition}
For cost vector $f$ its {\em min-marginal} at node $i$ is the function $m^f \colon \L \to \Real$ given by
\begin{align}
\mm^f(x_i) = \min_{x_{\V \backslash i} } f(x).
\end{align}
\end{definition}
Function $\mm^f(x_i)$ is a min projection of $f(x)$ onto $x_i$ only. Given the choice of $x_i$, it returns the cost of the best labeling in $f$ that passes through $x_i$. For a chain problem it can be computed using dynamic programming. Let us assume that the nodes $\V$ are enumerated in the order of the chain and $\E = \{(i, i+1) \mid i=1\dots |\V|-1\}$. We then need to compute: 
{\em left min-marginals}: $\varphi_{i-1,i}(x_i) := $
\begin{align}
\min_{x_{1,\dots i-1} } \sum_{i' < i}f_{i'}(x_{i'}) + \sum_{i'j' \in \E \mid i' < i }f_{i'j'}(x_{i'},x_{j'});
\end{align}
and {\em right min-marginals}: $\varphi_{i+1,i}(x_i) := $
\begin{align}
\min_{x_{i+1,\dots |V|} } \sum_{i' > i}f_{i'}(x_{i'}) + \sum_{i'j' \in \E \mid i' \geq i }f_{i'j'}(x_{i'},x_{j'}).
\end{align}
These values for all $ij\in\E$, $x_i,x_j\in\L$ can be computed dynamically (recursively). After that, the min-marginal $\mm^f(x_i)$ expresses as
\begin{align}\label{min-marg-expression}
\mm^f(x_i) = f_i(x_i) + \varphi_{i-1,i}(x_i) + \varphi_{i+1,i}(x_i).
\end{align}
\par
TRW-S method~\cite{Kolmogorov-06-convergent-pami} can be derived as selecting one node $i$ at a time and maximizing~\eqref{DD-dual} with respect to $\lambda_i$ only.
For the two slave problems in~\eqref{DD-dual} TRW-S needs to compute min-marginals $\mm^{f+\lambda}(x_i)$ and $\mm^{g-\lambda}(x_i)$. A (non-unique) optimal choice for $\lambda_i$ would be to ensure that
\begin{align}
\mm^{f+\lambda}(x_i) = \mm^{g-\lambda}(x_i) \ \ \forall x_i \in \L
\end{align}
by setting 
\begin{align}\label{TRW-S-update}
\lambda_i := \lambda_i + (\mm^{g-\lambda}(x_i) - \mm^{f+\lambda}(x_i))/2.
\end{align}
\par
If $i$ and $j$ are two nodes in a chain $f+\lambda$ then performing the update of $\lambda_i$ changes the min-marginal at $j$ and vice-versa. The updates must be implemented sequentially or otherwise one gets a non-monotonous behavior and the method may fail to converge (see~\cite{Kolmogorov-06-convergent-pami}). 
\par
TRW-S gains its efficiency in that after the update~\eqref{TRW-S-update}, the min-marginal at a neighboring node can be recomputed by a single step of dynamic programming. Let the neighboring node be $j = i+1$. The expression for the right min-marginal at $j$ remains correct and the expression for left min-marginal is updated using its recurrent expression $\varphi_{ij}(x_{j}) := $
\begin{align}\label{msg-pass}
\min_{x_i} \big[ \varphi_{i-1, i}(x_{i}) + f_{i}(x_{i}) + f_{ij}(x_{i},x_{j})\big],
\end{align}
also known as {\em message passing}. Then min-marginal at $j$ becomes available through~\eqref{min-marg-expression}.
\par
It is possible to perform update~\eqref{TRW-S-update} in parallel by scaling down the step size by the number of variables (or the length of the chain). This is equivalent to decomposing a chain $f$ into $n$ copies with costs $f/n$ so that they contribute one for each node $i$ with a min-marginal $\mm^f(x_i) / n$. Confer to the parallel tree block update algorithm of~\citet[Fig. 1]{Sontag-09}). However, the gain from the palatalization does not pay off the decrease in the step size.
\subsection{Slacks}
In the following we will also use the term slack. Shortly, it is explained as follows. The dual problem~\eqref{DD-dual} can be written as a linear program, see \eg,~\cite{Werner-PAMI07}. Dual inequality constraints in that program can satisfied as equalities, in which case they are tight, or they can be satisfied as strict inequalities in which case there is a {\em slack}. Equivalent reparametrization of the problem (change of the dual variables) can propagate a slack from one constraint (corresponding to a label-node pair) to another one. If all constraints in a group becomes non-tight, their minimum slack can be subtracted and increments the lower bound. Since for a chain problem the LP relaxation is tight, the maximum slack that can be concentrated in a label-node equals the corresponding min-marginal.
\subsection{Good Minoratns}
\begin{definition}A modular minorant $\lambda$ of $f$ is {\em maximal} if there is no other modular minorant $\lambda' \geq \lambda$ such that $\lambda'(x) > \lambda(x)$ for some $x$.
\end{definition}
\begin{lemma}
For a maximal minorant $\lambda$ of $f$ all min-marginals of $f - \lambda$ are identically zero.
\end{lemma}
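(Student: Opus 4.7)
The plan is to argue by contradiction: if some min-marginal of $f - \lambda$ were strictly positive at a label-node pair, we could boost $\lambda$ at exactly that coordinate and obtain a strictly larger modular minorant, contradicting maximality of $\lambda$.

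First, I would record the obvious half: since $\lambda$ is a minorant, $(f-\lambda)(x) \geq 0$ for all $x$, hence $\mm^{f-\lambda}(x_i) \geq 0$ for every node $i$ and every label $x_i \in \L$. So only the upper bound needs work.

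Suppose, for contradiction, that there exist $i \in \V$ and $x_i^* \in \L$ with $\delta := \mm^{f-\lambda}(x_i^*) > 0$. Define a new modular cost vector $\lambda'$ by
\begin{equation*}
 \lambda'_j(\cdot) = \lambda_j(\cdot) \text{ for } j \neq i, \qquad
 \lambda'_i(x_i) = \lambda_i(x_i) + \delta\,\Ind[x_i = x_i^*].
\end{equation*}
Clearly $\lambda' \geq \lambda$ component-wise, with strict inequality at $(i, x_i^*)$. I would then verify that $\lambda'$ is still a minorant of $f$. Pick any $x \in \L^\V$. If $x_i \neq x_i^*$, then $\lambda'(x) = \lambda(x) \leq f(x)$. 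If $x_i = x_i^*$, then by the definition of the min-marginal, $f(x) \geq \mm^{f}(x_i^*) = \min_{x_{\V\setminus i}} f(x) $, and more precisely $f(x) - \lambda(x) \geq \mm^{f-\lambda}(x_i^*) = \delta$, which rearranges to
\begin{equation*}
 \lambda'(x) = \lambda(x) + \delta \leq f(x).
\end{equation*}
Thus $\lambda' \leq f$ everywhere, so $\lambda'$ is a modular minorant of $f$ with $\lambda'(x) > \lambda(x)$ for any $x$ having $x_i = x_i^*$. This contradicts maximality of $\lambda$, proving that $\mm^{f-\lambda}(x_i) = 0$ for all $i$ and $x_i$.

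There is no real obstacle here; the only subtle point is recognizing that the min-marginal value is exactly the amount of slack that can be safely added at a single label-node pair, which is the content of the "slack" discussion earlier in the appendix. I would briefly connect the argument to that remark to emphasize that maximality in the sense of the definition is equivalent to saturating all such per-label slacks simultaneously.
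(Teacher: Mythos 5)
Your proof is correct and follows essentially the same route as the paper's: both argue by contradiction that a strictly positive min-marginal at some label-node pair lets you add that amount to $\lambda$ at that single coordinate and obtain a strictly larger modular minorant. You merely spell out the verification that the paper compresses into ``clearly''; the intermediate remark $f(x) \geq \mm^{f}(x_i^*)$ is superfluous, but the decisive inequality $f(x)-\lambda(x) \geq \mm^{f-\lambda}(x_i^*)$ is exactly right.
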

\begin{proof}
Since $\lambda$ is a minorant, min-marginals $m_i(x_i) = \min_{x_{\V \backslash i}}[f(x)-\lambda(x)]$ are non-negative.
Assume for contradiction that $\exists i$, $\exists x_i$ such that $m_i(x_i) > 0$. Clearly, $\lambda'(x) := \lambda(x) + m_i(x_i)$ is also a minorant and $\lambda' > \lambda$.
\end{proof}
%
%
%
\par
Even using maximal minorants, the \Algorithm{DMM} can get stuck in fixed points which do not satisfy weak tree agreement~\cite{Kolmogorov-06-convergent-pami}, \eg suboptimal even in the class of message passing algorithms. 
Consider the following example of a minorant leading to a poor fixed point.
\begin{example}Consider a model in \Figure{fig:stuck} with two labels and strong Ising interactions ensuring that the optimal labeling is uniform. If we select minorants that just takes the unary term, without redistributing it along horizontal or vertical chains, the lower bound will not increase. For example, for the horizontal chain $(v_1, v_2)$, the minorant $(1,\ 0)$ (displayed values correspond to $\lambda_v(1) - \lambda_v(2)$).
This minorant is maximal, but it does not propagate the information available in $v_1$ to $v_2$ for the exchange with the vertical chain $(v_2,v_4)$.
\begin{figure}[th]
\includegraphics[width=0.5\linewidth]{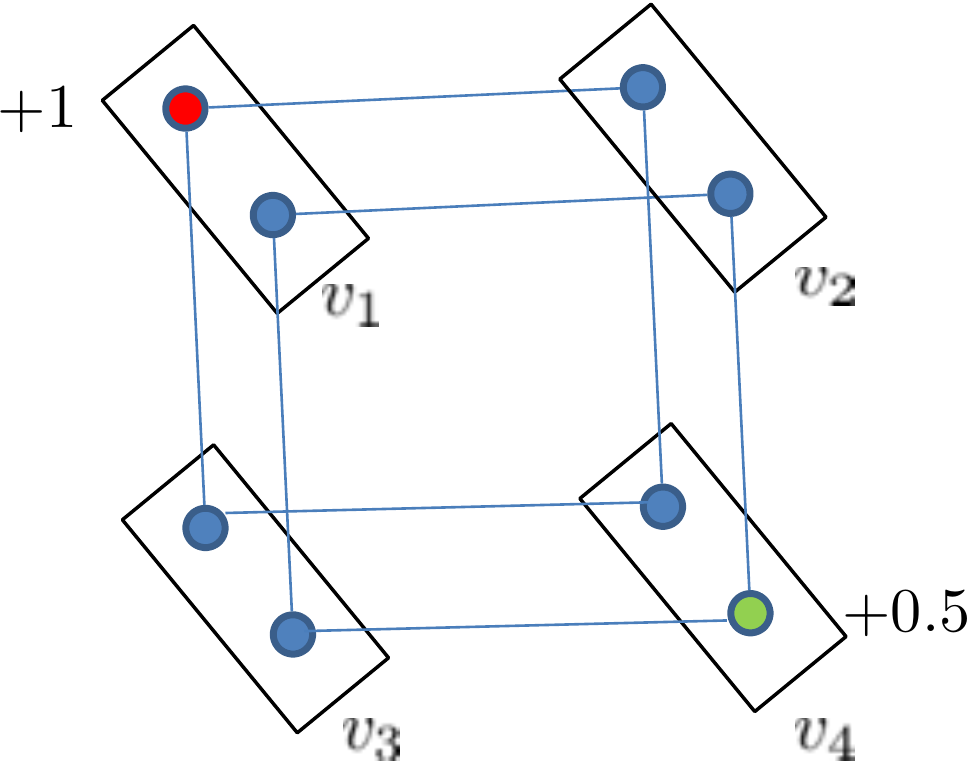}
\caption{Example minorize-minimize stucks with a minorant that does not redistribute slack.\label{fig:stuck}}
\end{figure}
\end{example}
\subsubsection{Uniform Minorants}
Dual algorithms, by dividing the slacks between subproblems ensure that there is always a non-zero fraction of it (depending on the choice of weights in the scheme) propagated along each chain. We need a minorant, which will expose in every variable what is the preferable solution for the subproblem. We can even try to treat all variables uniformly. The practical strategy proposed below is motivated by the following.
\begin{proposition}Let $f^* = \min_{x}f(x)$ and let $O_u$ be the support set of all optimal solutions $x_u^*$ in $u\in\V$. Consider the minorant $\lambda$ given by $\lambda_u(x_u) = \varepsilon (1 - O_u)$ and maximizing $\varepsilon$:
\begin{align}\label{max-epsilon}
&\max \{ \varepsilon \mid (\forall x) \ \varepsilon \<1-O, x\>  \leq f(x) \}.
\end{align}
\end{proposition}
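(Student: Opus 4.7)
The proposition introduces the family of modular minorants $\lambda_u(x_u) = \varepsilon(1 - O_u(x_u))$ and asks for the largest $\varepsilon$ that keeps $\lambda$ a minorant of $f$. My plan is to show that the optimization \eqref{max-epsilon} is well-posed, identify its optimum combinatorially, and verify that the resulting $\lambda$ is indeed a modular minorant of $f$ that is tight on every globally optimal labeling, so that it spreads the available slack uniformly across nodes.

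First I would establish well-posedness. Observing that $\varepsilon = 0$ is trivially feasible, the feasible set is non-empty. For any $x$ that is globally optimal (so $x_u \in O_u$ for every $u$) one has $\<1-O, x\> = 0$ and $f(x) = f^*$, so the constraint reduces to $0 \leq f^*$; under the standard shift $f^* = 0$ this holds automatically. For non-optimal $x$ the integer quantity $k(x) := \<1-O, x\>$ is at least $1$, and the binding constraint becomes $\varepsilon \leq f(x)/k(x)$. Taking the tightest such bound gives
\begin{equation*}
\varepsilon^* = \min_{x : k(x) \geq 1} \frac{f(x)}{k(x)},
\end{equation*}
and the minimum is attained because the labeling set $\L^\V$ is finite.

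Next I would verify the minorant property and tightness. By construction of $\varepsilon^*$, the inequality $\lambda^*(x) = \varepsilon^* \<1-O, x\> \leq f(x)$ holds for all $x$, so $\lambda^*$ is a modular minorant of $f$. On any globally optimal labeling $x \in O^\V$ we have $\lambda^*(x) = 0 = f(x)$, matching $f^*$, which is the sense in which the construction treats all nodes uniformly: every coordinate that leaves the optimal support is charged the same increment $\varepsilon^*$, and $\varepsilon^*$ is the largest such common increment compatible with the minorant constraint.

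The main obstacle I expect is that $\varepsilon^*$ is defined via a minimum over an exponentially large set of labelings, so the practical usefulness of the proposition hinges on an efficient computation of this ratio. For a chain $f$, I would solve it by a parametric search: for a candidate $\varepsilon$, compute the min-marginals of $f - \varepsilon(1-O)$ by dynamic programming as in \eqref{min-marg-expression}, and increase $\varepsilon$ as long as these min-marginals remain non-negative; the supremum of feasible $\varepsilon$ along this path is exactly $\varepsilon^*$, at which point some label becomes binding. Showing that this parametric procedure really attains the combinatorial minimum above, rather than saturating early at a non-binding label, is the delicate part I would want to justify carefully, and it is also the bridge to the ``fast minorants'' used in the implementation.
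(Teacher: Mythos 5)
The paper never actually proves this proposition --- as written it has no ``claim'' beyond the construction itself, and the surrounding text only asserts informally that $\varepsilon$ ``determines the step size which ensures monotonicity'' and, later, that \cref{Uniform-opt-problem} is a minimum-ratio path problem solvable by Lawler's algorithm. So you are filling a real gap, and you fill it in the way the paper intends: your closed form $\varepsilon^* = \min_{x:\,k(x)\ge 1} f(x)/k(x)$ is precisely the minimum-ratio objective the paper later computes, and feasibility of $\varepsilon=0$, attainment over the finite set $\L^\V$, and tightness of $\lambda^*$ on optimal labelings are exactly the properties the construction needs in order to serve as a valid (monotone, subgradient-direction) minorant in the dual scheme.

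Two points deserve correction or emphasis. First, your claim that every non-optimal $x$ has $k(x)=\<1-O,x\>\ge 1$ is false: $O_u$ collects the labels appearing in \emph{some} optimal solution, so a labeling can be per-node supported at every node yet globally suboptimal (two nodes, two labels, zero unaries, Ising pairwise cost: $O_u$ is the full label set at both nodes, but the two mixed labelings cost $1>f^*$). This does not break your formula --- constraints with $k(x)=0$ reduce to $0\le f(x)$ and are vacuous once $f$ is normalized --- but the case split should be ``$k(x)\ge 1$ versus $k(x)=0$'', not ``optimal versus non-optimal''. Second, the normalization $f^*=0$ that you introduce in passing is actually load-bearing: without it, problem \eqref{max-epsilon} is infeasible whenever $f^*<0$ (the constraint at an optimal $x$ reads $0\le f^*$), and $\lambda$ is exact at $x^*$ in the sense required by the dual minorize--maximize step only if the constant $f^*$ is absorbed elsewhere; the paper is silent about this, so stating the assumption explicitly is the right call. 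Your concluding parametric-search worry is legitimate but lies outside the proposition; note that \Algorithm{A:uniform-minorant} sidesteps it by re-solving the ratio problem on $f-\lambda$ at each round rather than sweeping $\varepsilon$ upward.
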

The above minorant assigns cost $\varepsilon$ to all labels but those in the set of optimal solutions. If the optimal solution $x^*$ is unique, it takes the form $\lambda = \varepsilon (1 - x^*)$. This minorant corresponds to the direction of the subgradient method and $\varepsilon$ determines the step size which ensures monotonicity. 
However it is not maximal. In $f - \lambda$ there still remains a lot of slack that can be useful when exchanging to the other problem. 
It is possible to consider $f - \lambda$ again. If we have solved~\eqref{max-epsilon}, it will necessarily have a larger set of optimal solutions. 
We can search for a maximal $\varepsilon_1$ that can be subtracted from all non-optimal label-nodes in $f - \lambda$ and so on. 
The algorithm is specified as \Algorithm{A:uniform-minorant}. 
\begin{algorithm}[tb]
\KwIn{Chain subproblem $f$\;}
\KwOut{Minorant $\lambda$\;}
$\lambda :=0$\;
\forever{}{
Compute min-marginals $m$ of $f-\lambda$\;
\lIf{$m = 0$}{\Return{$\lambda$}}
Let $O := \leftbb m = 0 \rightbb$, the support set of optimal solutions of $m-\lambda$\;
Find $\max \{ \varepsilon \mid (\forall x) \ \varepsilon \<1-O, x\>  \leq (f-\lambda)(x) \}$\label{Uniform-opt-problem}\;
Let $\lambda := \lambda + \varepsilon (1-O)$\;
}
\caption{Maximal Uniform Minorant\label{A:uniform-minorant}}
\end{algorithm}
\par
The optimization problem in \cref{Uniform-opt-problem} can be solved using the minimum ratio cycle algorithm of \citet{Lawler-66}. We search for a path with a minimum ratio of the cost given by $(f-\lambda)(x)$ to the number of selected labels with non-zero min-marginals given by $\<1-O,x\>$. This algorithm is rather efficient, however \Algorithm{A:uniform-minorant} it is still too costly and not well-suited for a parallel implementation. We will not use this method in practice directly, rather it establishes a sound baseline that can be compared to.
\par
The resulting minorant $\lambda$ is maximal and uniform in the following sense. 
\begin{lemma}Let $m$ be the vector of min-marginals of $f$. The uniform minorant $\lambda$ found by \Algorithm{A:uniform-minorant} satisfies 
\begin{align}
\lambda \geq m/n,
\end{align}
where $n$ is the length of the longest chain in $f$.
\end{lemma}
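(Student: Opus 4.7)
The plan is to track, for each label-node $(i,a)$, the cumulative increments accruing to $\lambda_i(a)$ and to match them against a witness path for the min-marginal $m_i(a)$. Write $\lambda^{(t)}$, $O^{(t)}$, $\varepsilon^{(t)}$ for the quantities produced in iteration $t$ of \Algorithm{A:uniform-minorant}, with $\lambda^{(0)}=0$. Without loss of generality I restrict attention to the chain through $(i,a)$, which has at most $n$ nodes, since both the min-marginals and the algorithm's steps decouple over disjoint chains.

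First I would establish monotonicity $O^{(t)}\subseteq O^{(t+1)}$. If $(j,b)\in O^{(t)}$, then the min-marginal of $f-\lambda^{(t-1)}$ at $(j,b)$ vanishes, so some labeling $z$ with $z_j=b$ attains $(f-\lambda^{(t-1)})(z)=0$; every label-node visited by $z$ then has zero min-marginal and hence lies in $O^{(t)}$, so $\langle 1-O^{(t)},z\rangle=0$. The increment $\varepsilon^{(t)}(1-O^{(t)})$ therefore leaves the cost of $z$ unchanged, giving $(f-\lambda^{(t)})(z)=0$ and $(j,b)\in O^{(t+1)}$. Combined with the termination condition ($O$ eventually contains every label-node), this yields, for each $(i,a)$, a well-defined first index $\tau=\tau(i,a)$ with $(i,a)\in O^{(\tau)}$, together with the clean bookkeeping identity
\[
\lambda_i(a)\;=\;\sum_{t=1}^{\tau-1}\varepsilon^{(t)},
\]
because $(1-O^{(t)})_i(a)=1$ precisely for $t<\tau$.

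Next I would exhibit the witness path. At iteration $\tau$ the min-marginal of $f-\lambda^{(\tau-1)}$ at $(i,a)$ is zero, so there exists $y^\ast$ with $y^\ast_i=a$ and $f(y^\ast)=\lambda^{(\tau-1)}(y^\ast)$; by definition $m_i(a)\le f(y^\ast)$. Expanding
\[
\lambda^{(\tau-1)}(y^\ast)\;=\;\sum_{t=1}^{\tau-1}\varepsilon^{(t)}\,\langle 1-O^{(t)},y^\ast\rangle
\]
and using $\langle 1-O^{(t)},y^\ast\rangle\le n$ (at most one nonzero summand per chain node) gives $m_i(a)\le n\sum_{t=1}^{\tau-1}\varepsilon^{(t)}=n\,\lambda_i(a)$, which is the claim.

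The main obstacle is the monotonicity step; the rest is accounting. This monotonicity is exactly what permits summing the increments at $(i,a)$ as a clean prefix $\sum_{t<\tau}\varepsilon^{(t)}$, and it guarantees that the witness path $y^\ast$ available at the moment $(i,a)$ first becomes optimal already encodes the full accumulated contribution to $\lambda_i(a)$ on the right-hand side, so no later iterations need to be tracked to close the inequality.
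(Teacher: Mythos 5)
Your proof is correct and rests on the same observation as the paper's (one-line) argument: the increment $\varepsilon^{(t)}$ is a ratio whose denominator $\langle 1-O^{(t)},x\rangle$ is at most the chain length $n$. You simply make this rigorous — via the monotonicity $O^{(t)}\subseteq O^{(t+1)}$, the prefix-sum identity for $\lambda_i(a)$, and the witness labeling $y^\ast$ — filling in the accounting the paper leaves implicit (note both arguments tacitly assume min-marginals are normalized so that $\min_x(f-\lambda)(x)=0$).
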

\begin{proof}
This is ensured by~\Algorithm{A:uniform-minorant} as in each step the increment $\varepsilon$ results from dividing the min-marginal by $\<1-O, x\>$ which is at most the length of the chain.
\end{proof}
In fact, when the chain is strongly correlated, the minorant will approach $m/n$ and we cannot do better than that. However, if the correlation is not as strong the minorant becomes tighter, and in the limit of zero pairwise interactions there holds $\lambda = m$. In a sense the minorant computes ``decorrellated'' min-marginals.
\par
The next example illustrates uniform minorants and steps of the algorithm.
%
%
%
%
\begin{example} Consider a chain model with the following data unary cost entries  (3 labels, 6 nodes):
\par
\setlength{\tabcolsep}{1ex}
\begin{tabular}{llllll}
 0 &  0 &  1 &  0 &  0 &  8\\
 9 &  7 &  0 &  3 &  2 &  8\\
 7 &  3 &  6 &  9 &  1 &  0\\
\end{tabular}
\newline\noindent
The regularization is a Potts model with cost $f_{uv}(x_u,x_v) = 1 \leftbb x_u \neq x_v \rightbb$. Min-marginals of the problem and iteration of~\Algorithm{A:uniform-minorant} are ilustrated in \Figure{fig:example-min-marg}. At the first iteration the constructed minorant is
\par
\begin{tabular}{llllll}
 0 &  0 &  0 &  0 &  0 &  1\\
 1 &  1 &  1 &  1 &  1 &  1\\
 1 &  1 &  1 &  1 &  1 &  0\\
\end{tabular}
\newline\noindent
And the final minorant is:
\par
\begin{tabular}{llllll}
 0 &  0 &  0 &  0 &  0 &  7\\
 8 &  7 &  1 &  2 &  2 &  7\\
 6 &  4 &  6 &  7 &  1 &  0\\
\end{tabular}
\newline\noindent
The minorant follows min-marginals (first plot in \Figure{fig:example-min-marg}), because the interaction strength is relatively weak and min-marginals are nearly independent. If we increase interaction strength to 5, we find the following min-marginals and minorant, respectively:
\newline\noindent
\renewcommand{\arraystretch}{1.2}
\begin{tabular}{llllll}
  0 &   0 &   0 &   0 &   0 &   3\\
 14 &  15 &   8 &   8 &   7 &   8\\
 12 &  13 &  15 &  10 &   1 &   0\\
\hline
  0 &   0 &   0 &   0 &   0 &   3\\
5.5 & 5.5 &   3 &   3 &   3 &   3\\
4.75 & 4.75 & 4.75 & 4.75 &   1 &   0\\
\end{tabular}
\par\noindent
It is seen that in this case min-marginals are correlated and only a fraction can be drained in parallel. The uniform approach automatically divides the cost equally between strongly correlated labels.
\begin{figure}[th]
\centering
\begin{tabular}{c}
\small (a)\ \begin{tabular}{c}\includegraphics[width=0.9\linewidth]{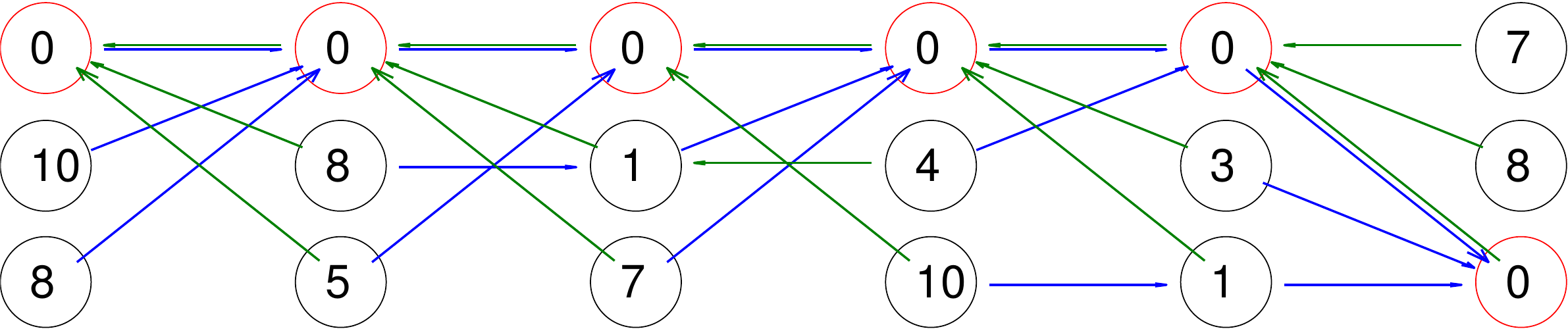}\end{tabular}\\
\small (b)\ \begin{tabular}{c}\includegraphics[width=0.9\linewidth]{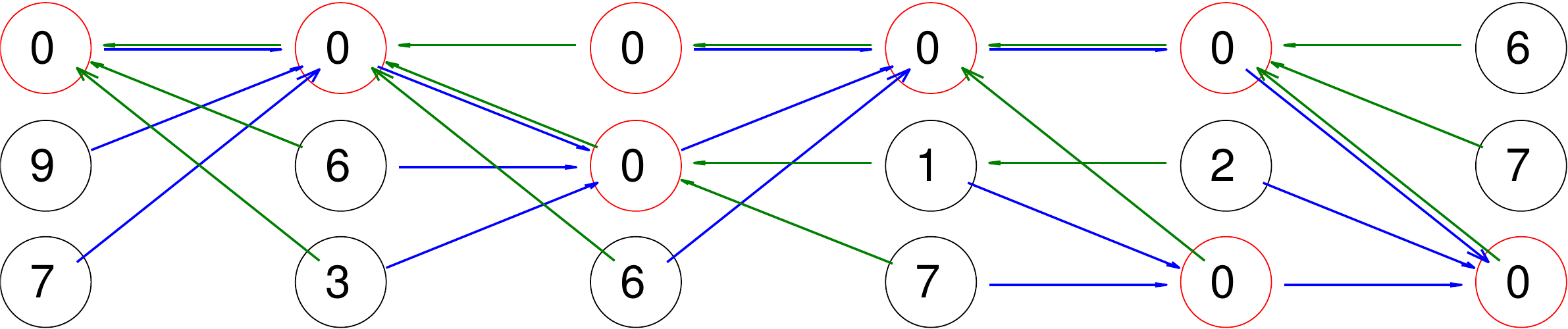}\end{tabular}\\
\small (c)\ \begin{tabular}{c}\includegraphics[width=0.9\linewidth]{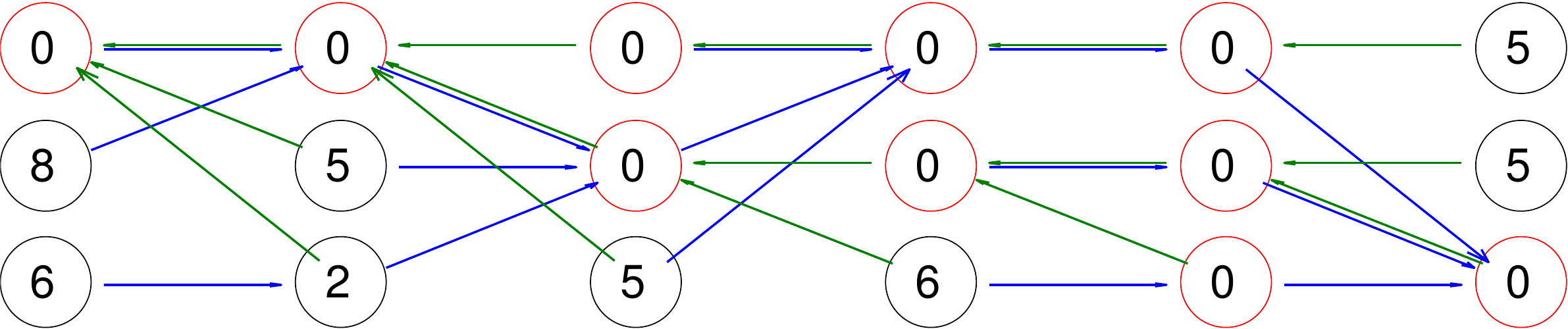}\end{tabular}\\
\end{tabular}
\caption{(a) Min-marginals (normalized by subtracting the value of the minimum) at vertices and arrows allowing to backtrack the optimal solution passing through a given vertex. (b), (c) min-marginals of $f-\lambda$ after one (resp. two) iterations of~\Algorithm{A:uniform-minorant} ($\varepsilon_1 = 1$ and $\varepsilon_2 = 1$). With each iteration the number of vertices having zero min-marginal strictly increases.
\label{fig:example-min-marg}}
\end{figure}
\par
\end{example}
\par
A basic performance test of \DMM with uniform minorants versus TRW-S is shown in~\Figure{fig:alg-uniform}. It demonstrates that the \DMM can be faster, when provided good minorants. The only problem is that determining the uniform minorant involves repeatedly solving minimum ratio path problems, plus there is a numerical instability in determining the support set of optimal solutions $O$.
%
%
\subsubsection{Iterative Minorants}
\begin{algorithm}[tb]
\KwIn{Chain subproblem $f$\;}
\KwOut{Minorant $\lambda$\;}
$\lambda :=0$\;
\For{$s = 1 \dots {\tt max\_pass}$}{
\For{$i = 1 \dots |V|$}{
	Compute min-marginal $m_i$ of $f-\lambda$ at $i$ dynamically, equations ~\eqref{msg-pass} and \eqref{min-marg-expression}\;
	$\lambda_i\ {+}{=}\ \gamma_s m_i$\;
}
Reverse the chain\;
}
\caption{Iterative Minorant\label{A:iterativr-minorant}}
\end{algorithm}
A simpler way to construct a maximal minorant would be to iteratively subtract from $f$ a portion of its min-marginals and accumulate them in the minorant, until all min-marginals of the reminder become zero. \Algorithm{A:iterativr-minorant} implements this idea. The portion of min-marginals drained from the reminder $f-\lambda$ to the minorant $\lambda$ in each iteration is controlled by $\gamma_s \in (0, 1]$. Reversing the chain efficiently alternates between the forward and the backward passes. For the last pass coefficient $\gamma_s$ is set to $1$ to ensure that the output minorant is maximal. \Figure{fig:alg-iterative} illustrates that this idea can perform well in practice.
\begin{figure}[th]
\centering
\includegraphics[width=\linewidth]{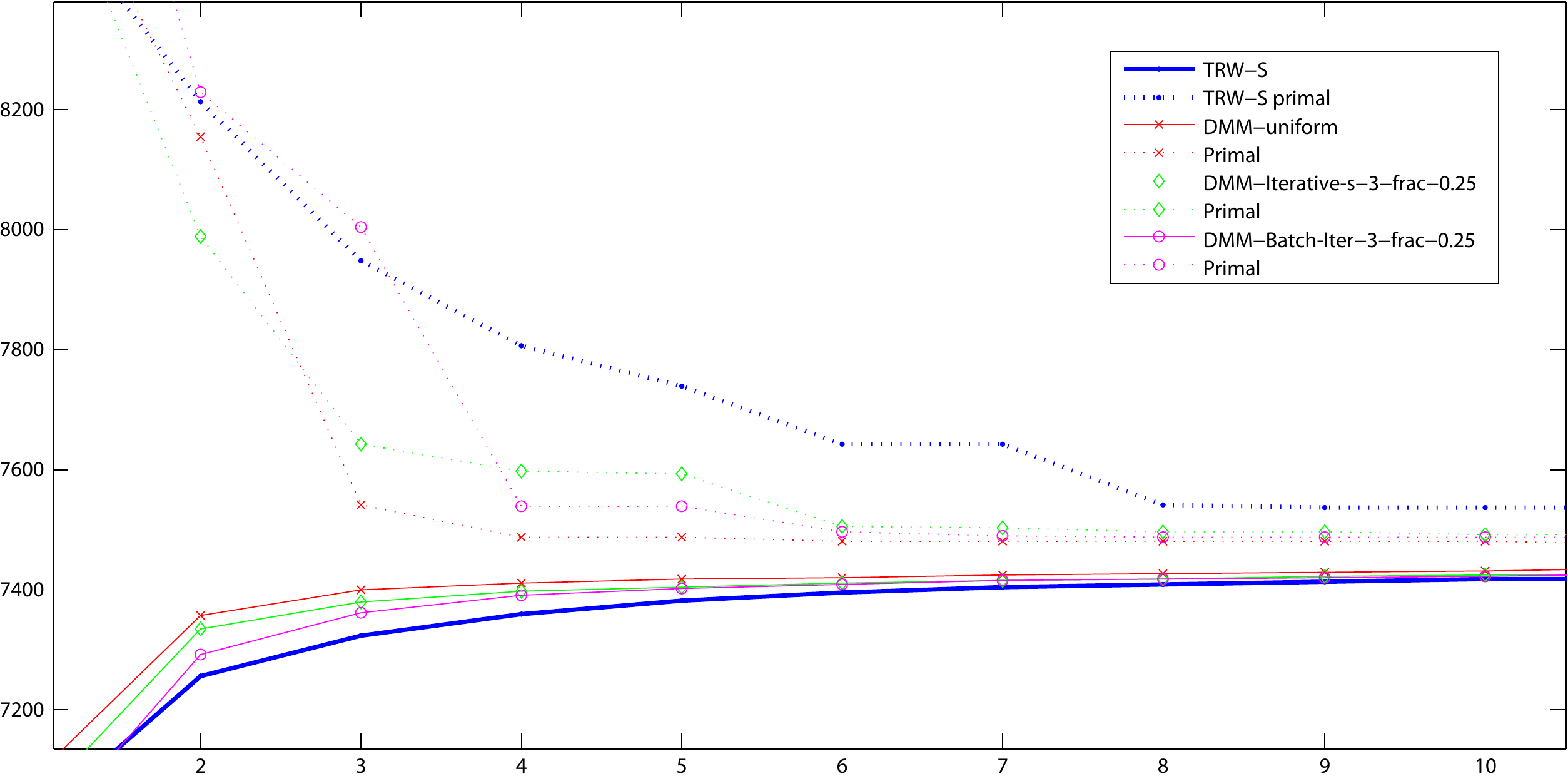}
\caption{Same setting as in \Figure{fig:alg-uniform}. The new plots show that Iterative minorants are not as good as uniform but still perform very well. Parameter ${\tt max\_pass}=3$ and  $\gamma_s = 0.25$ were used. The Batch Iterative method (Batch-Iter) runs forward-backward iterations in a smaller range, which is more cache-efficient and is also performing relatively well in this example.
\label{fig:alg-iterative}}
\end{figure}

\subsubsection{Hierarchical Minorants}
The idea of hierarchical minorants is as follows. Let $f$ be a one horizontal chain. We can break it into two subchains of approximately the same size, sharing a variable $x_i$ in the middle. By introducing a Lagrange multiplier over this variable, we can decouple the two chains. The value of the Lagrange multiplier can be chosen such that both subchains have exactly the same min-marginals in $x_i$. This makes the split uniform in a certain sense. Proceeding so we increase the amount of parallelism and hierarchically break the chain down to two-variable pieces, for which the minorant is computed more or less straightforwardly. This is the method used to obtain all visual experiments in the paper. Its more detailed benchmarking is left for future work.
%
%
%
\begin{algorithm}[!th]
\KwIn{Energy terms $f_i$, $f_j$, $f_{ij}$, messages $\varphi_{i-1,i}(x_i)$ and $\varphi_{j,j+1}(x_j)$ \;}
\KwOut{Messages for decorrellated chains: $\varphi_{ji}(x_i)$ and $\varphi_{ij}(x_j)$ \;}
\SetKwFunction{Msg}{{\ttfamily Msg}}
\tcc{Message from $j$ to $i$}
$\varphi_{ji}(x_i) := \Msg_{ji}(f_j + \varphi_{j,j+1})$\;
\tcc{Total min-marginal at $i$}
$m_i(x_i) := \varphi_{i-1,i}(x_i) + f_i(x_i) + \varphi_{ji}(x_i)$\;
\tcc{Share a half to the right}
$\varphi_{ij}(x_j) := \Msg_{ij}( m_i / 2 - \varphi_{ji} )$\;
\tcc{Bounce back what cannot be shared}
$\varphi_{ji}(x_i) := \Msg_{ji}( -\varphi_{ij} )$\label{bounce-back}\;
\SetKwProg{myproc}{Procedure}{}{}
  \myproc{$\Msg_{ij}(a)$}{
	\KwIn{Unary cost $a \in \Real^K$\;}
  \KwOut{Message from $i$ to $j$\;}
	\Return{$\varphi(x_j) := \min_{x_i \in \L} \big[ a(x_{i}) + f_{ij}(x_{i},x_{j})\big]$}\;
	}
\caption{\protect\anchor[{\texttt{Handshake}}]{Handshake}\label{A:handshake}}
\end{algorithm}
We detail now the simplest case when the chain has length two, \ie, the energy is given by $f_1(x_1)+f_{12}(x_1,x_2) + f_2(x_2)$.
The procedure to compute the minorant is as follows:
\begin{itemize}
\item Compute $m^f_1(x_1)$ and let $\lambda_1 := m^f_1(x_1) / 2$. \Ie, we subtract a half of the min-marginal in the first node.
\item Recompute the new min-marginal at node $2$: update the message $\varphi_{12}(x_2) := \Msg_{12}(f_1 - \lambda_1)$; Reassemble $m^{f-\lambda}_2(x_2) = \varphi_{12}(x_2) + f_2(x_2)$.
\item Take this whole remaining min-marginal to the minorant: let $\lambda_2 := m^{f-\lambda}_2(x_2)$.
\item Recompute the new min-marginal at node $1$: update the message $\varphi_{21}(x_1) := \Msg_{21}(f_2 - \lambda_2)$; It still may be non-zero. For example, if the pairwise term of $f$ is zero we recover the remaining half of the initial min-marginal at node $1$. Let $\lambda_1\ {+}{=}\ m^{f-\lambda}_1(x_1)$.
\end{itemize}
Importantly, the computation has been expressed in terms of message passing, and therefore can be implemented as efficiently.
The procedure fro the two-node case is straightforwardly generalized to longer chains. Let $ij$ be an edge in the middle of the chain. We compute left min-marginal at $i$, right min-marginal at $j$ and then apply the \Handshake procedure over the edge $ij$, defined in \Algorithm{A:handshake}. The procedure divides the slack between nodes $i$ and $j$ similarly to how it is described above for the pair. The result of this redistribution is encoded directly in the messages. 
The two subchains $1,\dots i$ and $j, \dots |\V|$ are ``decorrellated'' by the \Handshake and will not talk to each other further during the construction of the minorant. The left min-marginal for subchain $j, \dots |\V|$ at node $j+1$ is computed using update~\eqref{msg-pass} and so on until the middle of the subchain where a new \Handshake is invoked. The minorant is computed at the lowest level of hierarchy when the length of the subchain becomes two. The structure of the processing is illustrated in \Figure{fig:hierarchical-msgs}. It is seen that each level after the top one requires to send messages only for a half of nodes in total. Moreover, there is only a logarithmic number of level. It turns out that this procedure is not much more computationally costly than just computing min-marginals.
\begin{figure}
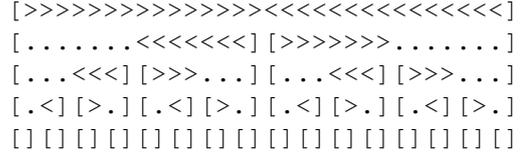

\centering
\texttt{[>>>>>>>>>>>>>>><<<<<<<<<<<<<<<]}\\
\texttt{[.......<<<<<<<][>>>>>>>.......]}\\
\texttt{[...<<<][>>>...][...<<<][>>>...]}\\
\texttt{[.<][>.][.<][>.][.<][>.][.<][>.]}\\
\texttt{[][][][][][][][][][][][][][][][]}
\caption{Messages passed in the construction of the hierarchical minorant for a chain of length 32. From top to bottom: level of hierarchical processing.
Symbols \texttt{>} and \texttt{<} denote message passing in the respective direction.
Brackets \texttt{[]} mark the limits of the decorrellated sub-chains at the current level. Dots denote places where the previously computed messages in the needed direction remain valid and need not be recomputed. Places where the two opposite messages meet correspond to the execution of the \Handshake procedure. The lowest level consists of 16 decorrellated chains of length 2 each.
\label{fig:hierarchical-msgs}}
\end{figure}
For example, to restore left min-marginal for the subchain $j,\dots |V|$, in node $i+1$ we 
%
%
%
%
\par
We conjecture that while iterative minorants may transfer only a geometric fraction of min-marginals in some cases, the hierarchical minorant is only by a constant factor inferior to the uniform one.
%
\subsection{Iteration Complexity}
The bottleneck in a fast implementation of dual algorithms are the memory access operations. This is simply because there is a big cost data volume that needs to be scanned in each iteration plus messages have to be red and written in TRW-S as well as in out \Algorithm{DMM} (dual variables $\lambda$). 
We therefore will assess complexity in terms of memory access operations and ignore the slightly higher arithmetic complexity of our minorants. 
\par
For TRW-S the accesses per pixel are:
\begin{itemize}
\item read all incoming messages (4 access);
\item read data term (1 access);
\item write out messages in the pass direction (2 accesses).
\end{itemize}
The cache can potentially amortize writing messages and reading them back in the next scan line, in which case the complexity could be counted as 5 accesses per pixel. However, currently only CPU cache is big enough for this, while multiprocessors in GPU have relatively small cache divided between many parallel threads.
\par
For the iterative minorant we have 3 forward-backward passes reading the data cost, the reverse message and writing the forward message (3*2*3 accesses), the last iteration writes $\lambda$ and not the message. Some saving is possible with a small cache set at a cost of more computations. Computing the hierarchical minorant as described in \Figure{fig:hierarchical-msgs} for a chain of length 2048, assuming that chunks of size $8$ already fit in the fast memory (registers + shared memory) has the following complexity. Reading data costs and writing messages until length 8 totals to $2 + \log_2(2048/8) / 2 = 6$ accesses. Reading messages is only required at \Handshake points and needs to be counted only until reaching length 8. Writing $\lambda$ adds one more access. These estimates are summarized in \cref{tab:mem-complexity}.
\begin{table}[htb]
\setlength{\tabcolsep}{3pt}
\begin{tabular}{c|c|c|c}
TRW-S & Iterative & Naive BCD & Hierarchical \\
\hline
7(5) & 18(8) & 5(4) & 7
\end{tabular}
\caption{Memory accesses per pixel in TRW-S and \DMM with variants of minorants. Naive BCD here means just computing min-marginals.\label{tab:mem-complexity}}
\end{table}
%
%



%
\end{document}